\crefname{figure}{Figure}{Figures}
\Crefname{figure}{Figure}{Figures}
\crefname{assumption}{Assumption}{Assumptions}
\crefname{table}{Table}{Tables}
\Crefname{table}{Table}{Tables}
\crefname{equation}{Equation}{Equations}
\Crefname{equation}{Equation}{Equations}
\crefname{section}{Section}{Sections}
\Crefname{section}{Section}{Sections}
\crefname{subsection}{Subsection}{Subsections}
\Crefname{subsection}{Subsection}{Subsections}
\declaretheorem{definition}
\DeclareMathOperator*{\rank}{rank}
\DeclareMathOperator*{\spn}{span}
\renewcommand{\ge}{\geqslant}
\renewcommand{\geq}{\geqslant}
\renewcommand{\le}{\leqslant}
\renewcommand{\leq}{\leqslant}
\newcommand{\norm}[1]{\lVert #1 \rVert}
\newcommand{\E}{\mathbb{E}}
\newcommand{\T}{\mathsf{T}}
\newcommand{\opnorm}[1]{\norm{#1}_{\mathrm{op}}}
\newcommand{\cL}{\mathcal{L}}
\newcommand{\cT}{\mathcal{T}}
\newcommand{\rN}{\mathbb{N}}
\newcommand{\rR}{\mathbb{R}}
\DeclareMathOperator*{\argmin}{arg\,min}
\DeclareMathOperator*{\expect}{{\huge \mathbb{E}}}
\newcommand{\sspace}{\mathcal{S}}
\begin{document}
\runningauthor{Charline Le Lan, Joshua Greaves, Jesse Farebrother, Mark Rowland, Fabian Pedregosa, Rishabh Agarwal, Marc Bellemare}
\runningtitle{A Novel Stochastic Gradient Descent Algorithm for Learning Principal Subspaces}

\twocolumn[

\aistatstitle{A Novel Stochastic Gradient Descent Algorithm \\for Learning Principal Subspaces}

\aistatsauthor{ Charline Le Lan \And Joshua Greaves \And  Jesse Farebrother \And Mark Rowland}

\aistatsaddress{ University of Oxford \And  Google Brain \And McGill University  \And DeepMind}

\aistatsauthor{Fabian Pedregosa \And Rishabh Agarwal \And Marc Bellemare}
\aistatsaddress{Google Brain \And Google Brain \And Google Brain} ]

\begin{abstract}
Many machine learning problems encode their data as a matrix with a possibly very large number of rows and columns.
In several applications like neuroscience, image compression or deep reinforcement learning, the principal subspace of such a matrix provides a useful, low-dimensional representation of individual data.
Here, we are interested in determining the $d$-dimensional principal subspace of a given matrix from sample entries, i.e. from small random submatrices.
Although a number of sample-based methods exist for this problem (e.g. Oja's rule \citep{oja1982simplified}), these assume access to full columns of the matrix or particular matrix structure such as symmetry and cannot be combined as-is with neural networks \citep{baldi1989neural}.
In this paper, we derive an algorithm that learns a principal subspace from sample entries, can be applied when the approximate subspace is represented by a neural network, and hence can be scaled to datasets with an effectively infinite number of rows and columns.
Our method consists in defining a loss function whose minimizer is the desired principal subspace, and constructing a gradient estimate of this loss whose bias can be controlled.
We complement our theoretical analysis with a series of experiments on synthetic matrices, the MNIST dataset \citep{lecun2010mnist} and the reinforcement learning domain PuddleWorld \citep{sutton1995generalization} demonstrating the usefulness of our approach.
\end{abstract}

\section{INTRODUCTION}
Learning compact representations of data while minimizing information loss is at the heart of machine learning. 
A common approach for doing so is to learn a $d$-dimensional principal subspace that explains most of the variation in the data, what is known as principal component analysis (PCA).
For small datasets, PCA can be accomplished by computing the singular value decomposition of the relevant data matrix.
For sufficiently large datasets, however, this approach is impractical and one must instead turn to a stochastic or sample-based procedure.

Streaming PCA algorithms learn an approximate principal subspace by sampling columns from the data matrix $\Psi$ and performing an incremental update that moves their approximation closer to the true subspace \citep[e.g.][]{krasulina1970method,oja1982simplified,gemp2020eigengame,gemp2021eigengame}. Central to these methods is the computation of the inner product between a full matrix column and the approximate subspace as well as a step to normalize the basis vectors parametrizing this subspace, making these methods most suited to problems where there are relatively few matrix rows.
Another line of work learns the principal subspace as the by-product of a low-rank linear regression problem. In this case, the learner forms a product $\Phi w_t$ where $\Phi$ encodes the approximate subspace and $w_t$ is a per-column weight vector; the aim is to minimize the Euclidean distance between $\Phi w_t$ and the column $\Psi_t$ \citep{srebro2003weighted, jin2016provable, sun2016guaranteed}. This approach has been effective for learning state representations in reinforcement learning \citep{bellemare2019geometric, gelada2019deepmdp,dabney2021value,lyle2021effect}, but can only handle a small number of columns, owing to the need to store an explicit weight vector for each.

In this paper, we consider the problem of learning a $d$-dimensional principal subspace by means of a neural network. Following common usage, we view the neural network as a mapping from the original input space to a $d$-dimensional vector space. We propose a fully sample-based algorithm which exhibits the best of the two classes of approaches above. Rather than maintain the weight vector $w_t$ in memory, we instead estimate it on-the-fly from samples -- effectively making the weight vector implicit.
We use the weight vector estimate to construct a gradient of a suitable loss function, on which we perform stochastic gradient descent in order to determine an approximation to the $d$-dimensional principal subspace.
Key to our approach is the derivation of the gradient in terms of Danskin's theorem.
Although the naive plug-in gradient fails to be an unbiased estimate and can perform quite poorly in practice, 
an unbiased estimate is obtained by constructing two independent weight vector estimates. These estimates are derived from a technique known as the LISSA (Linear (time) Stochastic Second-Order Algorithm, see \citet{agarwal2016second}) that produces a sequence of asymptotically-unbiased estimators of the inverse covariance matrix $(\Phi^\top \Phi)^\dagger$.
Based on its origins, we call the result the \emph{Danskin-LISSA} algorithm.

In \cref{sec:experiments}, we show that our algorithm can recover the principal subspace of synthetic matrices and of MNIST images, while only observing a small subset of the data matrix at each update.
We further demonstrate the effectiveness of our method for representation learning in reinforcement learning, specifically by learning a neural network-based approximation to the principal subspace of the successor measure \citep{blier2021learning} in the Puddle World domain \citep{sutton1995generalization}.

\section{BACKGROUND}
\label{sec:background}
\subsection{Problem Statement}
We consider a collection of {column functions} $\{ \psi_t \in \rR^S\}_{t \in \cT}$ where $\cT$ is an index set, and where each $\psi_t$ maps row indices to real values. We assume that the column indices and the row indices are drawn i.i.d from a distribution $\lambda$ on $\cT$ and $\xi$ on $\sspace$ respectively \footnote{We assume that $\xi(s)>0$ for all row indices $s \in \sspace$ and that $\lambda(t)>0$ for all column indices $t \in \cT$.}. For a given integer $d \in \rN$ and a \emph{row representation}
$\phi : \sspace \to \rR^d$, we define the \emph{representation loss}
\begin{equation}\label{eqn:representation_loss}
    \cL(\phi) = \expect_{t \sim \lambda} \Big [ \min_{w_t \in \rR^d} \expect_{s \sim \xi} \big [ ( \phi(s)^\top w_t - \psi_t(s) )^2 \big ] \Big ] .
\end{equation}
The representation loss describes the approximation error incurred by fitting the column function $\psi_t$ with the $d$-dimensional linear approximation $\phi(s)^\top w_t$, on average over draws from $\lambda$.
Here, we are interested in determining a $d$-dimensional representation $\phi$ that minimises $\cL(\phi)$ among all such representations.

For now, let us consider the case in which $\sspace$ and $\cT$ are of finite sizes $S$ and $T$, respectively. In this case, we may write $\Phi \in \rR^{S \times d}$ for the \textit{feature matrix} whose rows are $\big(\phi(s)\big)_{s \in \sspace}$ and $\Psi \in \rR^{S \times T}$ for the data matrix whose columns are $\big(\psi_t\big)_{t \in \cT}$. If additionally $W \in \rR^{d \times T}$ is a weight matrix, then finding the function $\phi$ that minimizes Equation \ref{eqn:representation_loss} is equivalent to jointly minimizing the loss $\cL(\Phi, W)$ over $\Phi$ and $W$, where
\begin{equation}\label{eqn:low_rank_approximation}
    \cL(\Phi, W)= \|\Xi^{1/2}(\Phi W -\Psi) \Lambda^{1/2} \|^2_F.
\end{equation}
Here, $\Xi \in \rR^{S \times S}$ (resp. $\Lambda \in \rR^{T \times T}$) is a diagonal matrix with entries $\{\xi(s): s \in \sspace \}$ (resp. $\{ \lambda(t): t \in \cT \}$) on the diagonal.
For a given $\Phi$, we write
\begin{equation}\label{eq:def_w}
    W^*_\Phi \in \argmin_{W \in \rR^{d \times T}} \cL(\Phi, W) \qquad \cL(\phi) = \cL(\Phi, W^*_\Phi) .
\end{equation}
From standard linear algebra (see \cref{lemma:w_star} in \cref{app:proofs}), in closed form we have
\begin{equation}
    W^*_\Phi = (\Phi^\top \Xi \Phi)^\dagger \Phi^\top \Xi \Psi . \label{eqn:optimal_weight_vector}
\end{equation}
Note that this expression does not depend on the column distribution $\Lambda$. We will use this matrix form to derive a gradient-based algorithm in the next section.

Equation \ref{eqn:low_rank_approximation} describes a weighted low-rank approximation problem \citep{srebro2003weighted}. Its solutions are the set of matrices $\Phi$ whose columns span the $d$-dimensional subspace of left singular vectors of $\Psi$ with respect to the inner product $(x, y)_\Xi = x^\T \Xi y$ (see \cref{prop:svd} in \cref{app:proofs} for a proof). If in addition the columns of $\Psi$ have mean zero, this corresponds to determining the subspace spanned by the $d$ principal components of $\Psi$.
Consequently, in the finite case our objective is to find a state representation whose implied feature matrix has columns that span this subspace. As we will see, one advantage of this objective over the more usual Rayleigh quotient in the case $d=1$,
\begin{equation*}
    \expect_{s, s' \sim \xi, t \sim \lambda} \big [ \phi(s) \psi_t(s) \psi_t(s') \phi(s') ] ,
\end{equation*}
is that its gradient incorporates an error term $\mathbb{E}_{s \sim \xi, t \sim \lambda}[(\phi(s)^\top w_t - \psi_t(s)) w_t^\top]$ that is naturally zero at a minimizer.

\section{PCA FROM SAMPLES}
\label{sec:method}
\label{sec:algo}
We assume access to a model from which we may repeatedly sample row indices according to the distribution $\xi$ and the values taken on at those row indices by column functions sampled from $\lambda$.
We are interested in the setting in which it is undesirable or impossible to sample the entire collection of column functions for a given state, or an entire column function all at once. This is different from the setting that approaches such as Oja's method \citep{oja1982simplified} or the recent EigenGame \citep{gemp2020eigengame} have considered for their experiments,
which in matrix terms assume that it is possible to sample entire rows or columns from $\Psi$ (for a longer discussion on prior work, see \cref{sec:relatedwork}).

Let us begin by expressing the gradient of the loss function $\cL(\Phi, W)$. In matrix form, this is
\begin{align}\label{eq:formula_gradient}
    \nabla_\Phi \cL(\Phi, W) &= 2 \Xi (\Phi W - \Psi) \Lambda W^\top\\
      \nabla_W \cL(\Phi, W) &= 2 \Phi^\top \Xi (\Phi W - \Psi) \Lambda
\end{align}
When the number of columns $T$ is small, finding an optimal $\phi$ can be accomplished by optimizing the loss function $\cL(\Phi, W)$ using a nested or two-timescale optimization procedure based on unbiased estimates of these gradients. For example, the pair of update rules 
\begin{align}
    \phi(s) &\gets \phi(s) - \alpha (\phi(s)^\top w_t - \psi_i(s)) w_t \nonumber \\
    w_t &\gets w_t - \beta  \phi(s) (\phi(s)^\top w_t - \psi_i(s)) \label{eqn:explicit_method}
\end{align}
finds an optimal representation $\phi$ under suitable conditions on the step-sizes $\alpha$ and $\beta$.
This is because the loss $\cL(\Phi, W)$ is convex in $W$ when $\Phi$ is fixed and the two-timescale algorithm allows us to approximately run gradient descent on the objective we care about.

When $T$ is large (or infinite), however, it may be expensive (or impossible) to store a separate weight vector for each column.
Instead, we rely on a form of the gradient of the loss $\cL(\phi)$ in which the weight vector is implicit.
\begin{restatable}{lemma}{danskin}
   Let $\beta >0$ be a regularization parameter. The loss $\cL: \rR^{S \times d} \rightarrow \rR$ defined by
    \begin{align}\label{eq:define-phi-loss}
        \cL(\Phi) = \min_{W \in \mathbb{R}^{d \times T}}\left( \| \Xi^{1/2} (\Phi W -\Psi ) \Lambda^{1/2} \|^2_F + \beta \| W \|_F^2 \right)
    \end{align}
    is continuously differentiable, with gradient
    \begin{align*}
        \nabla_\Phi \cL(\Phi) = 2 \Xi(\Phi W^*_\Phi - \Psi) \Lambda {W_\Phi^*}^\top \, ,
    \end{align*}
    where
    \begin{align}\label{eq:w-opt}
        W_\Phi^* = (\Phi^\top \Xi \Phi + \beta I)^{-1} \Phi^\top \Xi \Psi \, .
    \end{align}
    \label{lemma:danskin}
\end{restatable}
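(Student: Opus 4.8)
The plan is to apply Danskin's theorem (the envelope theorem) to the parametrized family of inner minimization problems. Define $f(\Phi, W) = \| \Xi^{1/2} (\Phi W - \Psi) \Lambda^{1/2} \|_F^2 + \beta \| W \|_F^2$, so that $\cL(\Phi) = \min_W f(\Phi, W)$. The key structural fact is that for each fixed $\Phi$, the function $W \mapsto f(\Phi, W)$ is strongly convex (the $\beta \| W \|_F^2$ regularizer guarantees this, with modulus at least $2\beta > 0$, uniformly in $\Phi$), hence it has a \emph{unique} minimizer, which we call $W_\Phi^*$. This uniqueness is exactly what makes Danskin's theorem yield differentiability rather than merely one-sided directional derivatives, so establishing it cleanly is the first step.

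Next I would derive the closed form for $W_\Phi^*$ by setting $\nabla_W f(\Phi, W) = 0$. From \cref{eq:formula_gradient} (with the extra regularizer contributing $2\beta W$), the stationarity condition reads $2\Phi^\top \Xi (\Phi W - \Psi) \Lambda + 2 \beta W = 0$. Here one must be slightly careful: the $\Lambda$ on the right of the Frobenius term means the normal equations are not column-separable in the naive way. However, since $\Lambda$ is diagonal with strictly positive entries, one can absorb it — or, more directly, observe that the first-order condition $\Phi^\top \Xi \Phi W \Lambda + \beta W = \Phi^\top \Xi \Psi \Lambda$ is a Sylvester-type equation whose unique solution, given that $\Phi^\top \Xi \Phi \succeq 0$ and $\beta I \succ 0$, is $W_\Phi^* = (\Phi^\top \Xi \Phi + \beta I)^{-1} \Phi^\top \Xi \Psi$; this is verified by direct substitution, using that $(\Phi^\top\Xi\Phi + \beta I)$ commutes appropriately so that the $\Lambda$ factors cancel column-by-column (equivalently, one can note the problem genuinely does decouple across columns $t$, each contributing $\lambda(t)\cdot[\text{column problem}]$ plus $\beta$ times that column's squared norm, and $\lambda(t)>0$ scales out of the per-column normal equation). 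I would relegate this verification to a one-line computation referencing \cref{lemma:w_star}.

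With uniqueness of the minimizer and joint continuity of $\nabla_\Phi f$ in hand, Danskin's theorem gives that $\cL$ is continuously differentiable with $\nabla_\Phi \cL(\Phi) = \nabla_\Phi f(\Phi, W)\big|_{W = W_\Phi^*}$. Plugging $W_\Phi^*$ into the first line of \cref{eq:formula_gradient} yields $\nabla_\Phi \cL(\Phi) = 2 \Xi (\Phi W_\Phi^* - \Psi) \Lambda {W_\Phi^*}^\top$, which is the claimed formula. The continuity of $\Phi \mapsto \nabla_\Phi \cL(\Phi)$ then follows from continuity of $\Phi \mapsto W_\Phi^*$ (the matrix inverse is continuous since $\Phi^\top \Xi \Phi + \beta I$ is always invertible) composed with the polynomial map giving the gradient.

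The main obstacle is not any single hard estimate but rather the bookkeeping around the $\Lambda$ factor when solving the inner problem: one must be sure the asserted closed form $W_\Phi^* = (\Phi^\top \Xi \Phi + \beta I)^{-1} \Phi^\top \Xi \Psi$ — which conspicuously does \emph{not} involve $\Lambda$ — is actually correct, and the cleanest justification is to note the inner objective decomposes as a $\lambda$-weighted sum over columns $t$ of independent ridge-regression problems $\lambda(t)\,\E_{s\sim\xi}[(\phi(s)^\top w_t - \psi_t(s))^2] + \beta\|w_t\|^2$ (after rescaling $\beta$), so the positive weight $\lambda(t)$ has no effect on each argmin. A secondary point worth stating explicitly is that Danskin's theorem in the form we need requires either compactness of the minimization domain or, as here, coercivity/strong convexity of $f(\Phi,\cdot)$ to guarantee the minimizer exists, is unique, and varies continuously; I would cite a standard reference (e.g., Bertsedes or Danskin) for the precise statement rather than reprove it.
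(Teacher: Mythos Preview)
Your approach is essentially the paper's: both rely on uniqueness of the inner minimizer (guaranteed by the $\beta$-regularizer) and the envelope/Danskin principle. The only presentational difference is that the paper writes out the chain rule explicitly---$\nabla_\Phi \cL(\Phi) = \nabla_\Phi \cL(\Phi, W_\Phi^*) + (\partial W_\Phi^*/\partial \Phi)^\top \partial_W \cL(\Phi, W_\Phi^*)$ and observes the second term vanishes at the minimizer---while you invoke Danskin's theorem as a named result; these are the same argument, and your version is if anything more careful about the hypotheses (strong convexity, continuity of $\Phi\mapsto W_\Phi^*$).

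One caution on the $\Lambda$ bookkeeping you flagged: your resolution is not quite right. The per-column problem is $\lambda(t)\,(\Phi w_t-\psi_t)^\top\Xi(\Phi w_t-\psi_t)+\beta\|w_t\|^2$, whose minimizer is $(\Phi^\top\Xi\Phi+(\beta/\lambda(t)) I)^{-1}\Phi^\top\Xi\psi_t$, so $\lambda(t)$ \emph{does} affect the argmin when $\beta>0$, contrary to your claim that ``the positive weight $\lambda(t)$ has no effect on each argmin.'' The paper simply asserts the closed form ``by linear algebra'' without addressing this, so you are in no worse shape than the original; and the gradient identity $\nabla_\Phi\cL(\Phi)=2\Xi(\Phi W_\Phi^*-\Psi)\Lambda{W_\Phi^*}^\top$ holds for whatever the true minimizer $W_\Phi^*$ is, so the main conclusion is unaffected. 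The stated closed form \eqref{eq:w-opt} is exact only when $\Lambda\propto I$ (or in the limit $\beta\to 0$, matching \cref{eqn:optimal_weight_vector}).
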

\begin{proof}
The proof is similar to the one of Danskin's theorem \citep{danskin2012theory}.
By linear algebra, the unique minimizer $W_\Phi^*$ in Equation~\eqref{eq:define-phi-loss} is given by Equation~\eqref{eq:w-opt}, which is itself differentiable with respect to $\Phi$.
By the chain rule, we have
\begin{equation}
     \nabla_\Phi \cL(\Phi)  = \nabla_\Phi \cL(\Phi, W^*_{\Phi}) + \left(\frac{\partial W^*_{\Phi}}{\partial \Phi}\,\right)^\top\frac{\partial}{\partial W^*_{\Phi}} \cL(\Phi, W^*_{\Phi}).
\end{equation}
Now, since $W^*_{\Phi}$ is defined as the (unconstrained) minimizer of $\cL(\Phi, W^*_{\Phi})$, its gradient with respect to the second argument vanishes at $W^*_{\Phi}$, and so second term is zero. The result then follows from the definition of $\nabla_\Phi \cL(\Phi, W)$ in Equation~\eqref{eq:formula_gradient}. \qedhere
\end{proof}

The idea is to use an instantaneous estimate of $W^*_\Phi$ to update the row representation in the negative direction of the (estimated) gradient of $\cL(\phi)$.
As we will see, such an estimate can be obtained by sampling as little as a single column and a small number of rows. 
In effect,  given a sample row index $s$ our goal is to obtain a gradient estimate $\hat g(s)$ such that
\begin{equation}\label{eqn:update_rule_implicit}
    \phi(s) \gets \phi(s) - \alpha \hat g(s)
\end{equation}
should converge to an optimal representation under suitable conditions on the time-varying step-size $\alpha$. In \cref{sec:deep-svd}, we will discuss how Equation \ref{eqn:update_rule_implicit} can be applied to learn parametrized row representations such as those described by neural networks.

Before describing our approach, it is worth noting that the procedure that naively estimates $W^*_\Phi$ from a subset of rows and columns results in a biased gradient estimate.
That is, suppose we are given the sample row indices $s, s', s_1, \dots s_n$ and sample column $t$. If we write $\hat \Phi$ for the matrix whose rows are $\phi(s_1), \dots, \phi(s_n)$ and construct the empirical covariance matrix $\hat C = \hat \Phi^\top \hat \Phi$, then we find that the estimate
\begin{equation}
    \hat g_{\textsc{naive}}(s) = \hat w_t \big( \phi(s)^\top \hat w_t - \psi_t(s) \big) \qquad \hat w_t = \hat C^\dagger \phi(s') \psi_{t}(s') \label{eqn:naive_estimator}
\end{equation}
is not an unbiased estimate of $\nabla_{\phi(s)} \cL(\Phi)$. In fact, the bias can be quite substantial when $n$ is small, as we empirically show in \cref{sec:experiments}.

\subsection{An Improved Gradient Estimate}

One issue with the estimate of Equation \ref{eqn:naive_estimator} is that the estimated weight vector $\hat w_t$ is itself a largely biased estimate of the optimal weight vector for column $t$ (that is, the $t$\textsuperscript{th} column of $W^*_\Phi$, $W^*_{\Phi, t}$). Conversely, unbiasedness is obtained if $\hat w_t$ satisfies
\begin{equation*}
    \expect [ \hat w_t ] = W^*_{\Phi, t},
\end{equation*}
and if the term $\hat w_t^\top$ is an independent, also unbiased estimate of ${W^*_{\Phi, t}}^\top$ in \cref{lemma:danskin}.
To reduce the bias of the naive estimate, we will 
construct two low-biased estimates of the inverse covariance matrix $(\Phi^\top \Phi)^\dagger$, $\hat C$ and $\hat C'$, from which we derive two independent weight estimates $\hat w_t$ and $\hat w_t'$.

Before we explain how to obtain these estimates, let us describe our algorithm at a high level. We begin by drawing three row indices $s, s', s''$ and a column index $t$. We then construct the weight estimates
\begin{equation*}
    \hat w_t = \hat C \phi(s') \psi_t(s') \qquad \hat w_t' = \hat C' \phi(s'') \psi_t(s'') ,
\end{equation*}
and then the gradient estimate
\begin{equation}\label{eqn:gradient_estimate_2rr}
    \hat g_{\textsc{dl}}(s) = \hat w_t' \big(\phi(s)^\top \hat w_t - \psi_t(s)\big).
\end{equation}
which uses two LISSA estimators~\citep{agarwal2016second} to construct independent weight estimates by application of Danskin's theorem.
In effect,  using two separate weight estimates effectively allows us to estimate the outer product $W^*_{\Phi,t} \big(W^*_{\Phi, t}\big)^\top$ appearing in \cref{lemma:danskin} with a very low bias and hence obtain a gradient estimate that is overall low-biased,
up to a multiplicative factor that we fold into the step-size parameter.
\begin{restatable}{theorem}{unbiasedestimator}
\label{thm:the_2rr_gradient_is_unbiased}
Let $e_s \in \rR^S$ denote a basis vector. Given two independent unbiased estimates $\hat{C}$ and $\hat{C}^\prime$ of the inverse covariance,
for $s \sim \xi$, the gradient estimate $\hat g_{\textsc{DL}}(s)$ given in \cref{eqn:gradient_estimate_2rr} satisfies
\begin{equation*}
    \expect [ e_s \hat g_{\textsc{dl}}(s)^\top ] = \Xi (\Phi W^*_\Phi - \Psi) \Lambda {W^*_{\Phi}}^\top.
\end{equation*}
\end{restatable}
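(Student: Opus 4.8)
The plan is to exploit the conditional-independence structure of the randomness — the three row indices $s,s',s''$, the column index $t$, and the two inverse-covariance estimates $\hat C,\hat C'$, each built from its own independent rows — and to peel off the expectations one group at a time, conditioning first on $t$ so that everything reduces to a statement about the $t$-th column $W^*_{\Phi,t}$ of $W^*_\Phi$. The structural facts I need are: $\{s'',\hat C'\}$ is independent of $\{s,s',\hat C\}$ given $t$; $s$ is independent of $(s',\hat C)$ given $t$; and $\E[\hat C]=\E[\hat C']$ equals the inverse covariance $(\Phi^\top\Xi\Phi+\beta I)^{-1}$ appearing in Lemma~\ref{lemma:danskin}.

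First I would note that each weight estimate is conditionally unbiased: since $\phi(s')$ is the $s'$-th row of $\Phi$ and $\psi_t(s')=[\Psi_t]_{s'}$, we have $\E_{s'}[\phi(s')\psi_t(s')]=\Phi^\top\Xi\Psi_t$, so multiplying by $\E[\hat C]$ and using independence of $\hat C$ from $s'$ gives $\E[\hat w_t\mid t]=(\Phi^\top\Xi\Phi+\beta I)^{-1}\Phi^\top\Xi\Psi_t=W^*_{\Phi,t}$ by Equation~\eqref{eq:w-opt}, and identically $\E[\hat w_t'\mid t]=W^*_{\Phi,t}$. Writing $e_s\hat g_{\textsc{dl}}(s)^\top=\big(\phi(s)^\top\hat w_t-\psi_t(s)\big)\,e_s(\hat w_t')^\top$ and using that $(\hat w_t')^\top$ depends only on $(s'',\hat C')$, which is independent of the remaining variables given $t$, I pull that factor out of the expectation to get $\E[e_s\hat g_{\textsc{dl}}(s)^\top\mid t]=\E\big[e_s(\phi(s)^\top\hat w_t-\psi_t(s))\mid t\big]\,(W^*_{\Phi,t})^\top$.

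Next I would peel off $\hat w_t$: conditioning on $s$ and averaging over $(s',\hat C)$, which is independent of $s$, replaces $\hat w_t$ by $W^*_{\Phi,t}$ and leaves $\E_{s\sim\xi}\big[e_s(\phi(s)^\top W^*_{\Phi,t}-\psi_t(s))\big]$. The scalar here is exactly the $s$-th coordinate of $\Phi W^*_{\Phi,t}-\Psi_t\in\rR^S$, so this sum equals $\Xi(\Phi W^*_{\Phi,t}-\Psi_t)$, giving $\E[e_s\hat g_{\textsc{dl}}(s)^\top\mid t]=\Xi(\Phi W^*_{\Phi,t}-\Psi_t)(W^*_{\Phi,t})^\top$. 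Taking $\E_{t\sim\lambda}$ and writing $A=\Phi W^*_\Phi-\Psi$ with $t$-th column $A_t$, the matrix $\sum_t\lambda(t)A_t(W^*_{\Phi,t})^\top$ is exactly $(A\Lambda)(W^*_\Phi)^\top$, since $A\Lambda$ has $t$-th column $\lambda(t)A_t$ and right-multiplying by $(W^*_\Phi)^\top$ contracts this against the $t$-th row $(W^*_{\Phi,t})^\top$. This is $\Xi(\Phi W^*_\Phi-\Psi)\Lambda(W^*_\Phi)^\top$, which is the claim.

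There is no deep obstacle here — the argument is essentially bookkeeping — but the step that does the real work, and the whole reason for maintaining two estimators instead of one, is the factorization $\E[\hat w_t(\hat w_t')^\top\mid t]=W^*_{\Phi,t}(W^*_{\Phi,t})^\top$, which is valid only because $\hat w_t$ and $\hat w_t'$ are conditionally independent given $t$; with a single shared estimate the right-hand side would instead be $\E[\hat w_t\hat w_t^\top\mid t]$, contributing the covariance of $\hat w_t$ as bias — precisely the defect of $\hat g_{\textsc{naive}}$. The one point requiring care is that the LISSA construction must be set up so that $\E[\hat C]=(\Phi^\top\Xi\Phi+\beta I)^{-1}$ rather than $(\Phi^\top\Phi+\beta I)^{-1}$ — i.e. its internal rows are also sampled from $\xi$ — so that $\E[\hat w_t]$ aligns with the $W^*_\Phi$ of Lemma~\ref{lemma:danskin}.
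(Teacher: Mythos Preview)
Your argument is correct and follows essentially the same route as the paper's proof: both exploit the independence of $(s'',\hat C')$ from $(s,s',\hat C)$ to factor the expectation of the outer product, then reduce to $\Xi(\Phi W^*_\Phi-\Psi)\Lambda(W^*_\Phi)^\top$. The only cosmetic differences are that you organize the computation via conditioning on $t$ and peeling off layers, whereas the paper writes everything in one shot with explicit basis vectors $e_s,e_{s'},e_{s''},e_t$ and factors all expectations simultaneously; and you take ``inverse covariance'' to mean the regularized $(\Phi^\top\Xi\Phi+\beta I)^{-1}$ of Lemma~\ref{lemma:danskin}, while the paper's proof uses the pseudoinverse $(\Phi^\top\Xi\Phi)^\dagger$ and the corresponding $W^*_\Phi$ from Equation~\eqref{eqn:optimal_weight_vector} --- neither choice affects the structure of the argument.
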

Note that the estimate $ \hat g_{\textsc{dl}}(s)$ does not require the set of columns $\cT$ to be finite. As such, our procedure can also be used to learn the principal components of infinite sets of columns; we will demonstrate this point in \cref{sec:deep-svd}.

\subsection{Estimate of the Weight Vector $W^*_{\Phi,t}$}
\label{sec:weightlissa}
We begin by deriving a procedure which, given access to a stream of sample row representations
$\big(\phi(s_j)\big)_{j=1}^\infty$, asymptotically produces an unbiased estimate of the optimal weight vector for a given column $t$.

Central to our procedure is an estimate $\hat C$ of the inverse covariance matrix $(\Phi^\top \Xi \Phi)^\dagger$. We construct this estimate by embedding what is known as the \emph{LISSA estimator} \citep[originally used to estimate the Hessian inverse]{agarwal2016second}.
Our algorithm is parameterised by two scalars, $\kappa$ and $J$, which trade off estimator variance with sample complexity.
All proofs can be found in \cref{app:proofsalgo}.

To begin, consider an arbitrary matrix $\Phi \in \rR^{S \times d}$ and denote $\opnorm{\cdot}$ the spectral norm.
For any $\kappa < \opnorm{\Phi^\top\Xi \Phi}^{-1}$, the Moore-Penrose pseudo-inverse of $(\Phi^\top\Xi \Phi)^\dag$ has a Neumann series expansion of the form
\begin{align}\label{eqn:neumann_series_for_pseudoinverse}
    (\Phi^\top\Xi \Phi)^\dag = \kappa \sum_{i=0}^\infty (I - \kappa \Phi^\top\Xi \Phi)^i.
\end{align}
Here, $\kappa$ is a scaling parameter that ensures the convergence of the series.
Denoting $S_j$ the first $j$ terms of the above series, we have that
\begin{align*}
    S_j =  \kappa I + (I -  \kappa \Phi^\top\Xi \Phi)S_{j-1}.
\end{align*}
We use this observation to build an estimator of $(\Phi^\top\Xi \Phi)^\dag$ with access to a finite number of samples from $\sspace$.
\begin{definition}[LISSA estimator]
Let $\Phi \in \rR^{S \times d}$ be a feature matrix.
Let $s_{1:J}=\{s_1, s_2, ..., s_J \}$ be $J$ i.i.d. row indices sampled from $\xi$. Let $\kappa_0 \in (0, 2)$ and $\kappa = \kappa_0  \sup_{s_{1:J}} \|\phi(s_i)\|^{-2}_2$. The $j$-LISSA estimator $\widehat{\Delta}_j$ is recursively given by
\begin{align}
    \widehat{\Delta}_0 &= \kappa I \nonumber \\
    \widehat{\Delta}_j &= \kappa I + (I - \kappa \phi(s_j) \phi(s_j)^\top)\widehat{\Delta}_{j-1}, \; 0 < j \le J . \label{eq:lissa-recursion}
\end{align}\label{defn:lissa_estimator}
\end{definition}
\begin{restatable}[Bias of LISSA]{lemma}{bias}
For $\kappa < \sup_{s_{1:J}} 2 \|\phi(s_i)\|^{-2}_2$, the bias of $\widehat{\Delta}_{j}$ with respect to $(\Phi^\top \Xi \Phi)^{\dag}$ is given by
\begin{align*}
 \mathbb{E}(\widehat{\Delta}_{j}) -  (\Phi^\top \Xi \Phi)^{\dag} 
  &=  -( \Phi^\top \Xi \Phi)^{\dag} (I -  \kappa \Phi^\top \Xi \Phi)^{j+1} 
\end{align*}
In particular, this bias asymptotically vanishes, in the sense that
\begin{equation*}
    \lim_{j \to \infty} \mathbb{E}(\widehat{\Delta}_{j}) -  (\Phi^\top \Xi \Phi)^{\dag}  =0 .
\end{equation*}
\label{lemma:bias}
\end{restatable}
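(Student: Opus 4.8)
The plan is to unroll the recursion~\eqref{eq:lissa-recursion}, recognize its expectation as a truncation of the Neumann series~\eqref{eqn:neumann_series_for_pseudoinverse}, and read off the bias as the discarded tail. Write $A := \Phi^\top\Xi\Phi$ and $v_j := \phi(s_j)$, and treat $\kappa$ as a deterministic quantity depending only on $\Phi$, so that $s_1,\dots,s_J$ are mutually independent. First I would show, by a one-line induction on $j$, that
\[
\widehat\Delta_j \;=\; \kappa\sum_{i=0}^{j}\Big(\textstyle\prod_{l=j-i+1}^{j}\big(I-\kappa v_l v_l^\top\big)\Big),
\]
where the product runs over $l$ in decreasing order and is the identity when $i=0$. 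The $i$-th summand involves only the $i$ distinct samples $s_{j-i+1},\dots,s_j$, which are independent; since $\E[I-\kappa v_l v_l^\top] = I - \kappa\,\E_{s\sim\xi}[\phi(s)\phi(s)^\top] = I-\kappa A$ and expectation is multiplicative over independent factors, taking expectations gives
\[
\E[\widehat\Delta_j] \;=\; \kappa\sum_{i=0}^{j}(I-\kappa A)^i .
\]

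Next I would subtract the full series. Because every $\phi(s_j)$ lies in $\operatorname{range}(A)$, the entire recursion lives in $\operatorname{range}(A)$, where $A$ is invertible and~\eqref{eqn:neumann_series_for_pseudoinverse} holds; equivalently, one may assume $\Phi$ has full column rank, in which case $A^\dagger=A^{-1}$. Then
\[
\E[\widehat\Delta_j] - A^\dagger \;=\; -\,\kappa\!\!\sum_{i=j+1}^{\infty}(I-\kappa A)^i \;=\; -\,(I-\kappa A)^{j+1}\,\kappa\sum_{i=0}^{\infty}(I-\kappa A)^i \;=\; -(I-\kappa A)^{j+1}A^\dagger,
\]
and since $(I-\kappa A)^{j+1}$ and $A^\dagger$ are both polynomials in the symmetric matrix $A$ they commute, which rearranges to the claimed formula $-(\Phi^\top\Xi\Phi)^\dagger(I-\kappa\Phi^\top\Xi\Phi)^{j+1}$. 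For the limit, the hypothesis $\kappa < 2\sup_{s_{1:J}}\|\phi(s_i)\|_2^{-2}$ forces every nonzero eigenvalue $\lambda$ of $A$ to satisfy $0<\kappa\lambda<2$, hence $|1-\kappa\lambda|<1$; so $(I-\kappa A)^{j+1}\to 0$ on $\operatorname{range}(A)$ and the bias vanishes as $j\to\infty$.

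The computation is largely routine linear algebra, so the points I would be most careful about are two technical ones. First, the scalar $\kappa$ as defined above is a function of the sampled rows $s_{1:J}$, which would spoil the independence used in the multiplicative-expectation step; I would resolve this by regarding $\kappa$ as a deterministic function of $\Phi$ (e.g. taking the supremum over all rows rather than over the sample), which is harmless for the statement. Second, the recursion's behaviour on $\ker(\Phi^\top\Xi\Phi)$ is genuinely not that of the pseudo-inverse (there $\widehat\Delta_j$ grows linearly in $j$), so I would either restrict attention to $\operatorname{range}(\Phi^\top\Xi\Phi)$ --- which is all that matters, since the estimator is only ever applied to vectors of the form $\Phi^\top(\cdots)\in\operatorname{range}(\Phi^\top\Xi\Phi)$ --- or assume outright that $\Phi^\top\Xi\Phi$ is invertible, the regime of interest. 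An alternative to the series manipulation, giving the same answer, is a direct induction on the bias $E_j := \E[\widehat\Delta_j]-A^\dagger$: the recursion yields $E_j=(I-\kappa A)E_{j-1}$ after the $\kappa I$ and $A^\dagger$ terms telescope, with base case $E_0 = \kappa I - A^\dagger = -A^\dagger(I-\kappa A)$.
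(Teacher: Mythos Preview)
Your proposal is correct and follows essentially the same route as the paper: first establish $\E[\widehat{\Delta}_j]=\kappa\sum_{i=0}^{j}(I-\kappa\Phi^\top\Xi\Phi)^i$ (the paper does this by a separate induction lemma rather than by unrolling, but it is the same computation), then subtract the pseudo-inverse and collapse the geometric series to obtain $-(\Phi^\top\Xi\Phi)^\dagger(I-\kappa\Phi^\top\Xi\Phi)^{j+1}$. Your two technical caveats---treating $\kappa$ as deterministic despite its stated dependence on $s_{1:J}$, and the failure of the Neumann series on $\ker(\Phi^\top\Xi\Phi)$---are valid and in fact more careful than the paper's own proof, which silently passes over both issues.
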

While for any finite value of $J$, the LISSA estimator $\widehat{\Delta}_j$ is not an unbiased estimate, \cref{lemma:bias} establishes that its bias can be made arbitrarily small with enough samples. In our experiments, we will show that with few row samples this results in substantially better convergence compared to a naive estimate of the covariance matrix.

In \cref{defn:lissa_estimator}, the parameter $\kappa$ controls the rate of convergence of the full Neumann series: larger
values of $\kappa$ result in faster convergence, requiring fewer samples to obtain an estimate that has little bias with regards to the inverse covariance matrix. However, larger 
values of $\kappa$ ($\kappa$ is bounded above as per \cref{defn:lissa_estimator}) also produce estimators that have higher variance.
Although here we consider the simplest setting in which a single sample is used at each iteration $j$ in Equation \ref{eq:lissa-recursion}, %
the variance of the estimator can of course be reduced by using several samples per iteration. 

\subsection{Algorithm Based on LISSA}
Provided that we use the LISSA procedure twice to construct two independent estimates $\hat w_t$, $\hat w'_t$ of the optimal weight vector $W^*_{\Phi,t}$, it is straightforward to demonstrate that $\hat g_{\textsc{dl}}(s)$ (Equation \ref{eqn:gradient_estimate_2rr}) becomes an unbiased estimate of the gradient of the loss $\mathcal{L}(\Phi)$ as $J \to \infty$; furthermore, for finite $J$ its bias is controlled as a consequence from \cref{lemma:bias}. We may then perform gradient descent with this estimate, adjusting the $s$\textsuperscript{th} row of the matrix $\Phi$ according to
\begin{equation}\label{eqn:tabular_update_with_two_rr}
    \phi(s) \gets \phi(s) - \alpha \hat g_{\textsc{dl}}(s),
\end{equation}
where $\alpha \in [0, 1)$ is a suitable step size. Based on our derivation, we call this procedure the Danskin-LISSA algorithm. In practice, it is usually desirable to update $\phi$ for $N > 1$ rows at once and use $M > 1$ samples to estimate $\hat w_i$ and $\hat w'_i$; we give this more general form in Algorithm \ref{alg1}. Note that while larger values of $J$ are desirable in order to reduce estimation bias, larger values of $M$ and $N$ contribute to reducing the variance of the gradient estimate $\hat g_{\textsc{dl}}$ and speeding up the learning process.

An important case is when the row representation $\phi$ is given by a mapping that is parametrized by a collection of weights $\theta$, in particular a neural network. In this case, Equation \ref{eqn:tabular_update_with_two_rr} should be replaced by an update rule that adjusts the weights $\theta$. In practice, this can be done by determining the Jacobian $\frac{\partial{\phi}}{\partial {\theta}}$ of $\phi$ with respect to the weights $\theta$, and applying the update
\begin{equation*}
    \theta \gets \theta - \alpha \frac{\partial{\phi}}{\partial {\theta}} \hat g_{\textsc{dl}}(s) .
\end{equation*}
An alternative particularly suited to automatic differentiation frameworks \citep{bradbury2018jax, abadi2016tensorflow, paszke2017automatic}, is to define a loss function whose gradient corresponds to $ \frac{\partial{\phi}}{\partial {\theta}} \hat g_{\textsc{dl}}(s)$. One can verify that the sample loss function
\begin{align*}
    \tfrac{1}{2} \big(\ell(\hat w_t + \hat w_t') - \ell(\hat w_t) - \ell(\hat w_t')\big) \\
    \ell(w) = \big(\phi(s)^\top \textsc{sg}(w) - \psi_t(s) \big)^2
\end{align*}
satisfies this requirement, where $\textsc{sg}$ denotes the stop-gradient operation (in the sense that $\nabla_\theta \textsc{sg}(w) = 0$). Additionally, the recursion in Equation \ref{eq:lissa-recursion} can be implemented efficiently by first computing the vector-matrix product $\phi(s_j)^\top \widehat{\Delta}_{j-1}$ and then taking the outer product of the result with $\phi(s_j)$.

\begin{algorithm}[H]
\caption{Danskin-LISSA}\label{alg1}
\begin{algorithmic}[1]
  \STATE \textbf{Parameters}: Dimension $d \in \rN^+$, $J, M, N \in \rN^+$, $\alpha$, $\kappa_0 \in (0, 2)$
  \REPEAT
  \STATE Sample independent rows $s_{1:N}, s'_{1:M}, s''_{1:M} \sim \xi$
  \STATE Sample a column $t \sim \lambda$
  \STATE $\hat C \gets \textsc{lissa}(\kappa_0, J)$
  \STATE $\hat C' \gets \textsc{lissa}(\kappa_0, J)$
  \STATE $w_t = \hat C \sum_{k=1}^M \phi(s'_k) \psi_t(s'_k)$
  \STATE $\hat w_t' = \hat C' \sum_{k=1}^M \phi(s''_k) \psi_t(s''_k)$
  \STATE $\hat g_{\textsc{2lissa}}(s_k) = \hat w_t' \big(\phi(s_k)^\top \hat w_t - \psi_t(s_k)\big)$
  \STATE $\phi(s_k) \gets \phi(s_k) - \alpha \hat g_{\textsc{dl}}(s_k)$ for $k = 1, \dots, N$
  \UNTIL{satisfied}
\end{algorithmic}
\end{algorithm}

\section{RELATED WORK}
\label{sec:relatedwork}
\paragraph{Streaming PCA.}
\looseness=-1\citet{oja1982simplified} and \citet{krasulina1970method} proposed the original streaming PCA algorithms. They approximate the top eigenvector of a matrix through a stochastic approximation of the power method. \citet{tang2019exponentially} extends this method to other principal components but requires explicit normalization. \citet{amid2020implicit} extends it without the need to explicitly performing orthonormalization after each gradient step at the cost of a batch having to be of size $1$.

\citet{pfau2018spectral} recovers the subspace spanned by the top eigenfunctions of \emph{symmetric} infinite dimensional matrices by parametrizing them with neural networks and performing gradient descent on a kernel-based loss. It is itself a generalization of slow feature analysis \citep{wiskott2002slow} in the tabular setting. \citet{deng2022neuralef} extends the objective from \citet{gemp2020eigengame} to the function space and propose an algorithm to learn the top $d$-eigenfunctions of symmetric matrices by representing them with $d$ neural networks. To find the principal subspace of a general infinite dimensional matrix $\Psi$, the approaches above require computing eigenfunctions of $\Psi \Psi^T$, which requires full row access to $\Psi$. 
By contrast, our method can recover the principal subspace of any infinite dimensional matrix using samples entries from rows of $\Psi$. 

\paragraph{Low-rank matrix completion.}
In this setting, we observe a subset of entries from a data matrix and aim to find a low-rank matrix that matches these observations \citep{srebro2003weighted}. Matrix factorization is a common technique to solve this problem where the matrix of interest is expressed as a product $\Phi W$. It can be solved efficiently by standard optimization algorithms \citep{sun2016guaranteed}. \citet{hardt2014understanding, jain2013low} rely on alternating minimization over the representation and weight matrices and guarantee convergence towards the true matrix. Other methods perform gradient descent \citep{li2019symmetry, ye2021global} or stochastic gradient descent \citep{jin2016provable, ge2015escaping, de2015global}. \citet{keshavan2010matrix, keshavan2009gradient} minimize simultaneously over the representation $\Phi$ and the weights $W$ by gradient descent. \citet{dai2010set} first solves the inner optimization problem and find the optimal weight matrix $W$ and then takes a gradient step on the outer optimization problem, with respect to the representation matrix $\Phi$. The Grassmannian Rank-One Subspace Estimation (GROUSE) algorithm \citep{balzano2010online} is a stochastic manifold gradient descent algorithm for tracking subspaces from incomplete data which was recently shown to be equivalent to Oja's algorithm \citep{balzano2022equivalence}. In comparison, we consider the problem of learning low-dimensional embeddings of higher dimensional vectors through neural networks and propose an optimization procedure which performs gradient descent on the representation matrix $\Phi$ only and where the weight matrix $W^*_\Phi$ is expressed implicitly, as a function of $\Phi$.

\section{EXPERIMENTS}
\label{sec:experiments}
\begin{figure*}[t]
    \centering
 \includegraphics[width=0.49\textwidth]{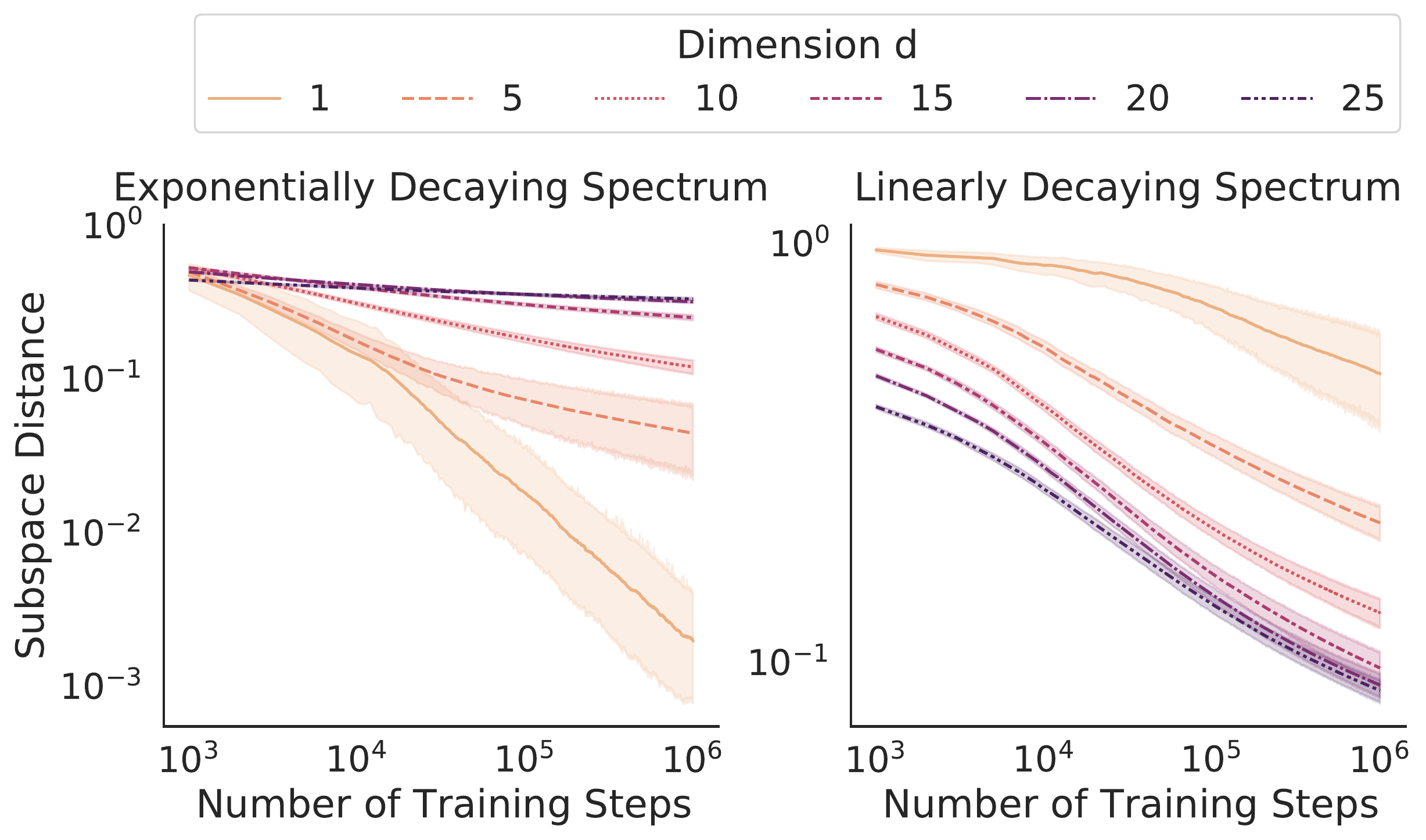}~~
 \includegraphics[width=0.49\textwidth]{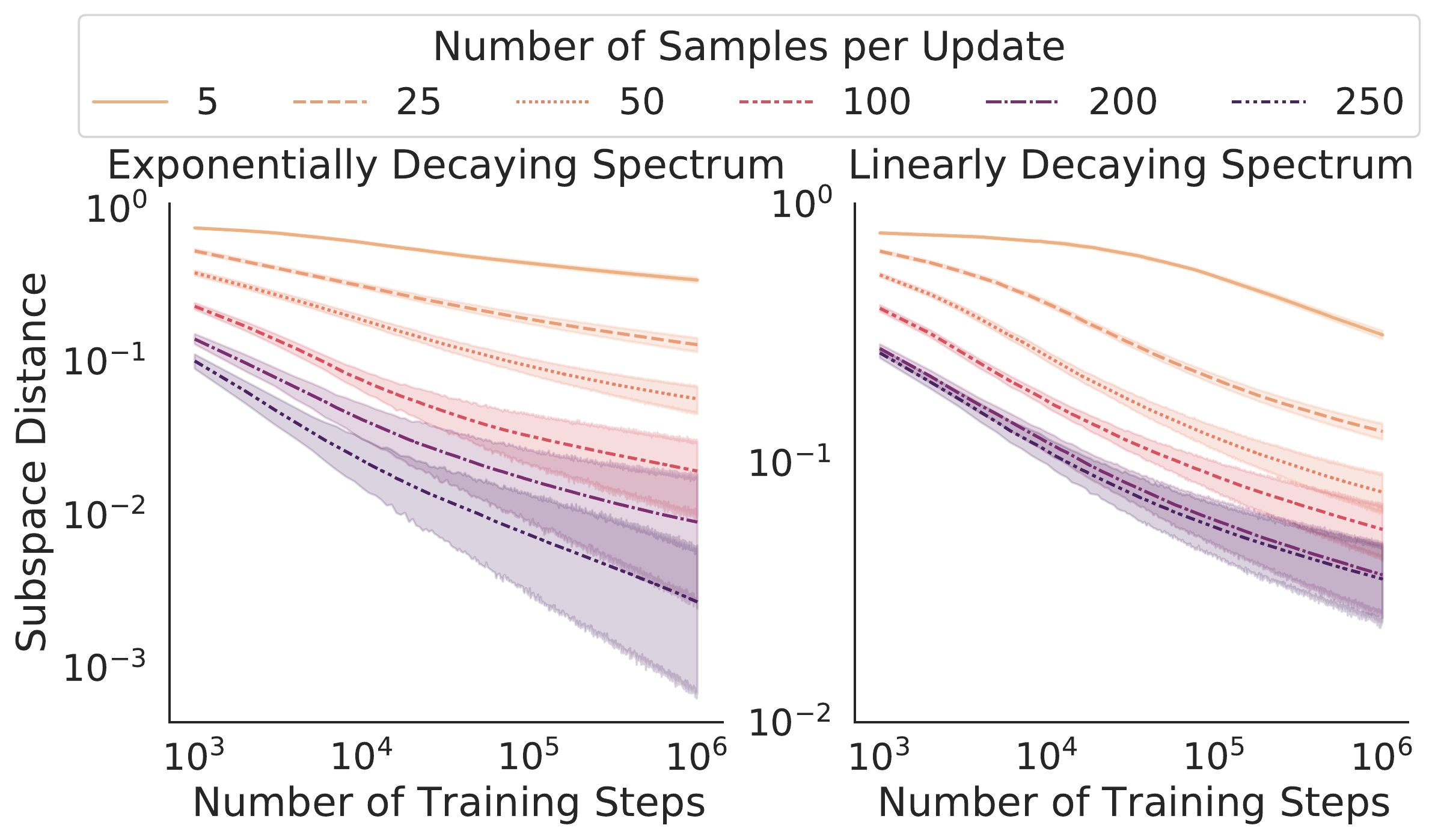}
    \caption{\looseness=-1 Subspace distance over the course of training LISSA for different dimensions (\textbf{Left}, $L=25$) and for different total number of samples per update (\textbf{Right}, $d=10$) on synthetic matrices with a spectrum decaying linearly and exponentially, averaged over 30 seeds. Shaded areas represent 95$\%$ confidence intervals.}
    \label{fig:plot1a}
\end{figure*}

\begin{figure*}[t]
    \centering
    \includegraphics[width=0.7\textwidth]{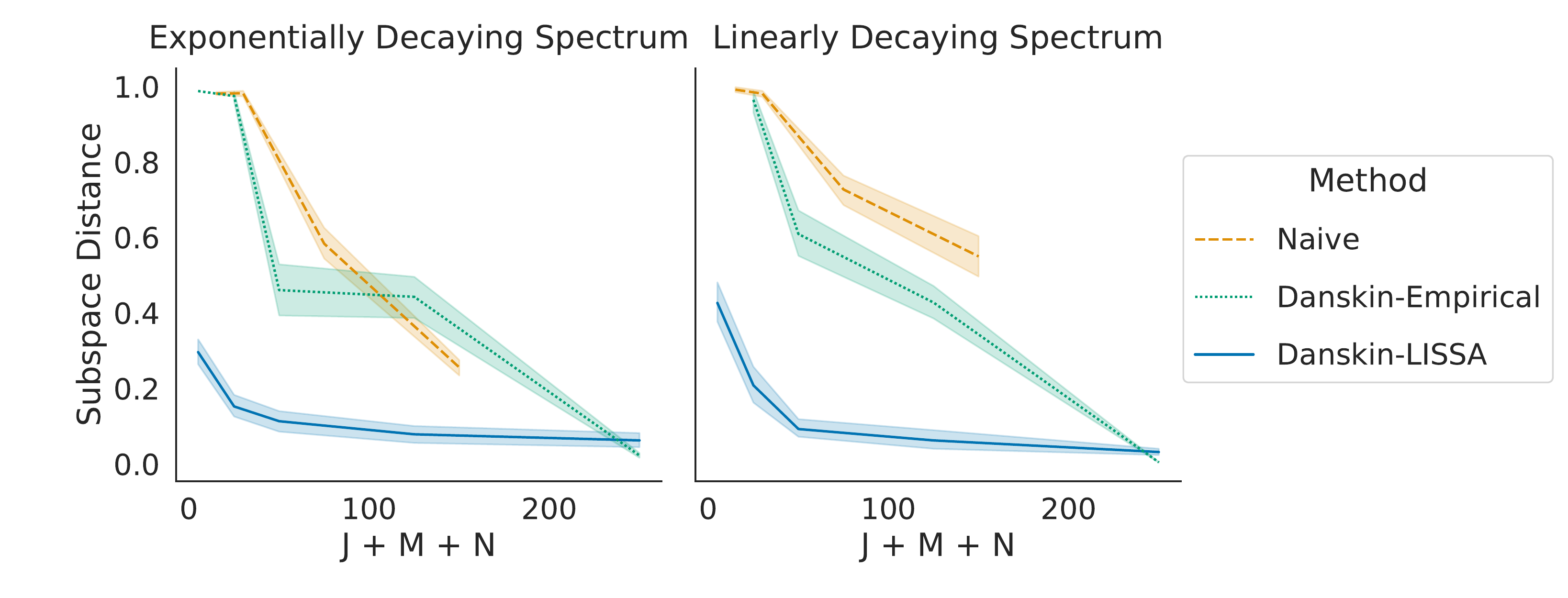}
    \vspace{-0.3cm}
    \caption{\looseness=-1 Subspace distance ($d = 10$) after $10^6$ training steps according to the method used to estimate the loss gradient. Here, the $x$ axis represents the total number of row samples $L$ from the $\Phi$ matrix ($L = 2J + 2M + N$ for the Danskin methods, $J + M + N$ for the naive method). Shaded areas represent 95$\%$ confidence intervals. Note that because we are sampling with replacement, $L = 250$ still differs from the gradient given in Lemma \ref{lemma:danskin}.}
    \label{fig:plot2}
\end{figure*}

\begin{figure*}[t]
    \centering
    \vspace{-0.2cm}
    \includegraphics[width=0.3\textwidth]{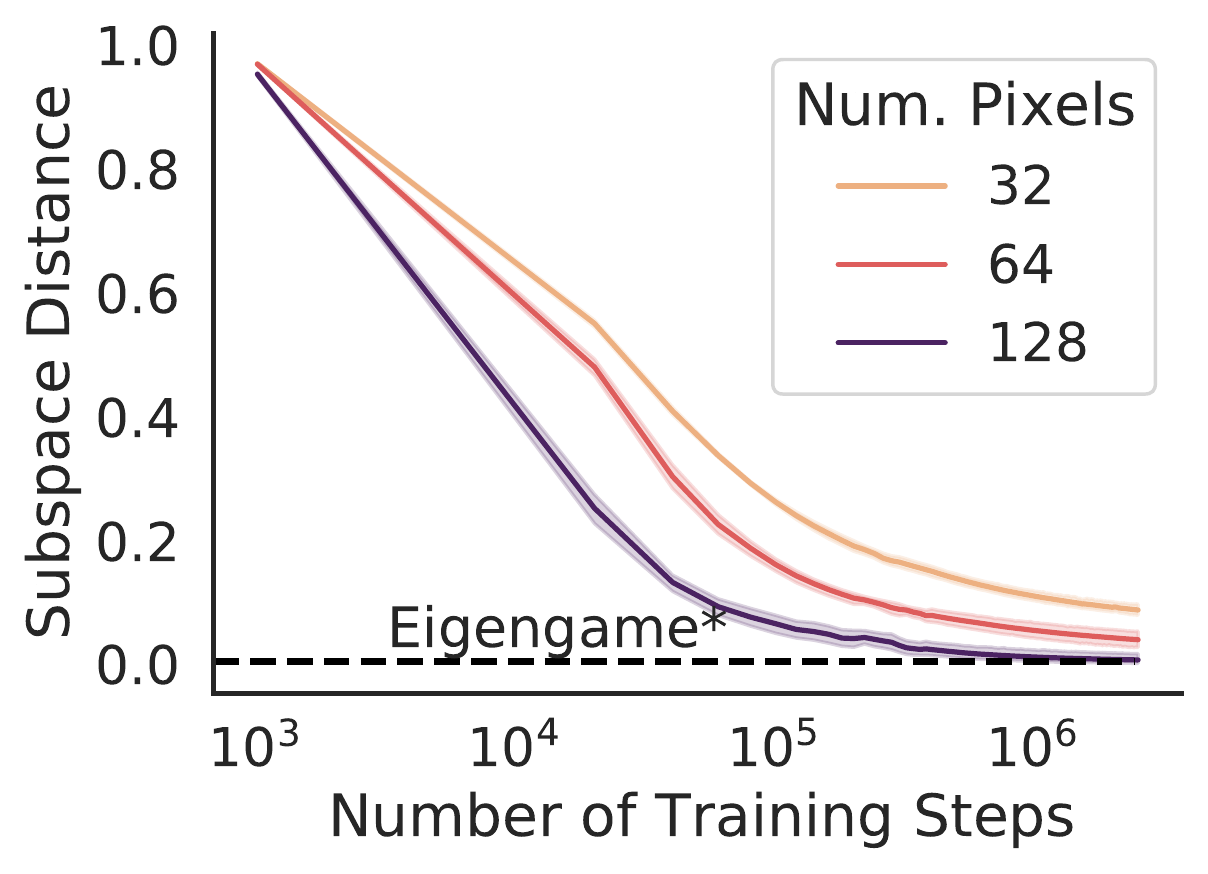}
    \includegraphics[width=0.6\textwidth]{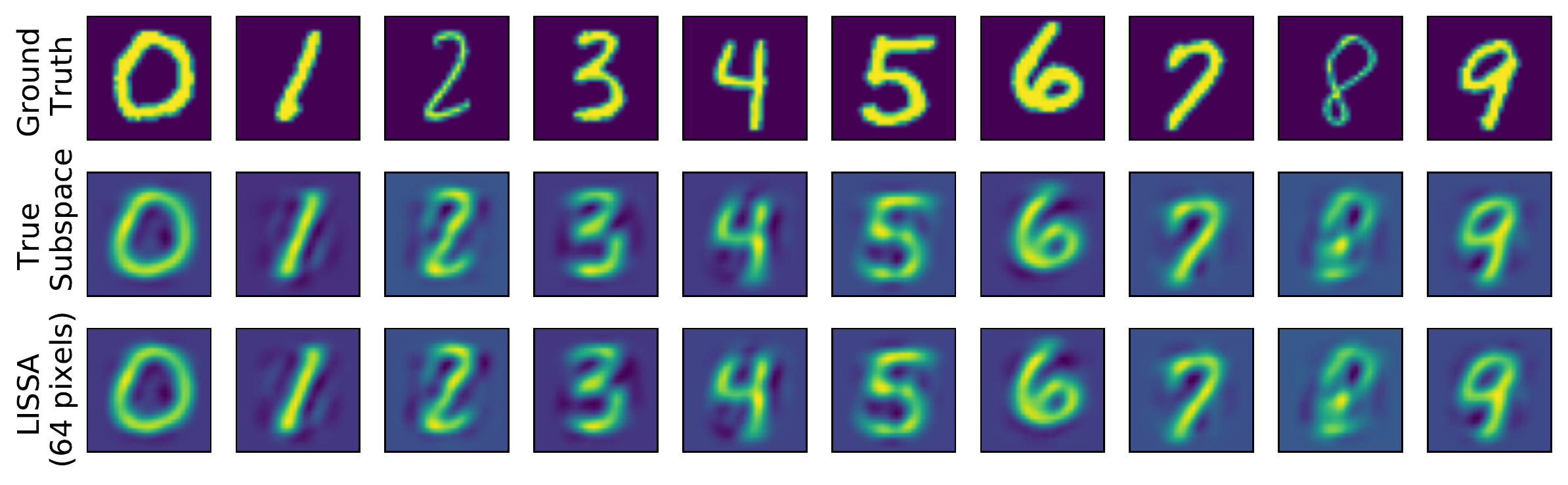}
    \caption{(\textbf{Left}) Training curves for LISSA on MNIST ($d=16$) that updates only a subset of pixels at a time. $*$: see main text. (\textbf{Right}) Reconstruction on MNIST test images. First row show samples from test images. Second are images reconstructed from the true principal components of $\Psi$ and third row are images reconstructed from the principal components learnt by Danskin-LISSA ($N = 64)$. Reconstruction MSE errors for true components and Danskin-LISSA are $21.46$ and $21.53$ respectively.}
    \label{fig:mnist}
    \vspace{-0.15cm}
\end{figure*}

We now conduct an empirical evaluation demonstrating that the Danskin-LISSA algorithm described in  \cref{sec:algo} recovers the $d$-dimensional principal subspace of different types of data: synthetic matrices, MNIST images \citep{lecun2010mnist} and the successor measure for the modified PuddleWorld domain \citep{sutton1995generalization}. In all cases, we measure convergence using the normalized subspace distance \citep{tang2019exponentially} between $\Phi$ and the principal subspace of $\Psi$:
\begin{align*}
    1-\frac{1}{d} \cdot \operatorname{Tr}\left(F_d F_d^\top P_\Phi\right) \in[0,1] .
\end{align*}
Here, $F_d$ are the top-$d$ left singular vectors of $\Psi$ and $P_\Phi = (\Phi^\top \Phi)^\dagger \Phi^\top$ is the orthogonal projector onto the column space of $\Phi$. For simplicity, we take $M = N = J$ in all experiments. The parameter $\kappa = \kappa_0 / \max_{s \in s_{1:J}} \| \phi(s) \|^{2}_2$, where $\kappa_0$ is a hyperparameter, is computed from the sampled feature vectors but we note that it can also be estimated online by a running average.

\subsection{Synthetic Matrices}
\label{sec:synthetic}
To begin, we consider a random matrix $\Psi \in \rR^{50 \times 50}$ whose entries are sampled from a standard normal distribution. 
In order to study our algorithm's behaviour under different conditions, we follow \citet{gemp2020eigengame} and set the matrix's singular values from 1000 to 1 linearly or exponentially (See \cref{app:syntheticmatrices}).
We selected the step size $\alpha=0.001$ and the parameter $\kappa_0=1.9$ from a hyperparameter sweep (\cref{fig:plotkappa} in \cref{app:syntheticmatrices} compares performance for different values of $\kappa_0$, in particular illustrating how $\kappa_0 > 2$ fares poorly, according to our theory).

\cref{fig:plot1a}, left illustrates that the Danskin-LISSA algorithm successfully recovers the $d$-dimensional principal subspace given sufficiently many training steps, with smaller values of $d$ being easier to learn for the exponentially decaying spectrum (results for $d \ge 25$ are given in the appendix). However, we see that learning the subspace spanned by a representation of dimension $d=25$ is easier than $d=1$ for linearly decaying spectrum.
\cref{fig:plot1a} right demonstrates that empirically, it is possible to obtain a reasonable approximation of the principal subspace even for a very smaller number of samples ($J = 1$ being the extreme), despite our theoretical expectation of a biased covariance estimate.

As described in \cref{sec:method}, the Danskin-LISSA approach stems from a combination of several algorithmic concepts. First, it uses two independent estimates of the weight vectors. Second, it embeds a LISSA procedure to estimate the inverse covariance matrix. To understand better their relative importance in the performance of the Danskin-LISSA algorithm, we compare it to two sample-based baselines which have access to the same amount of information and memory. The first one uses the naive gradient estimator described in \cref{sec:method}. The second uses two separate weight estimates, following the derivation from Danskin's theorem, but uses the inverse of the empirical covariance matrix
rather than the LISSA procedure used in the Danskin-LISSA method -- accordingly, we call this the Danskin-Empirical method. \cref{fig:plot2} illustrates the bias-reducing advantage of the LISSA covariance estimator, in particular in the low-sample regime.
The naive method, which constructs a single weight estimate, has high bias and underperforms compared to both of these methods.

\begin{figure*}[t]
    \centering
    \hfill%
   \raisebox{-0.5\height}{\includegraphics[width=0.21\textwidth]{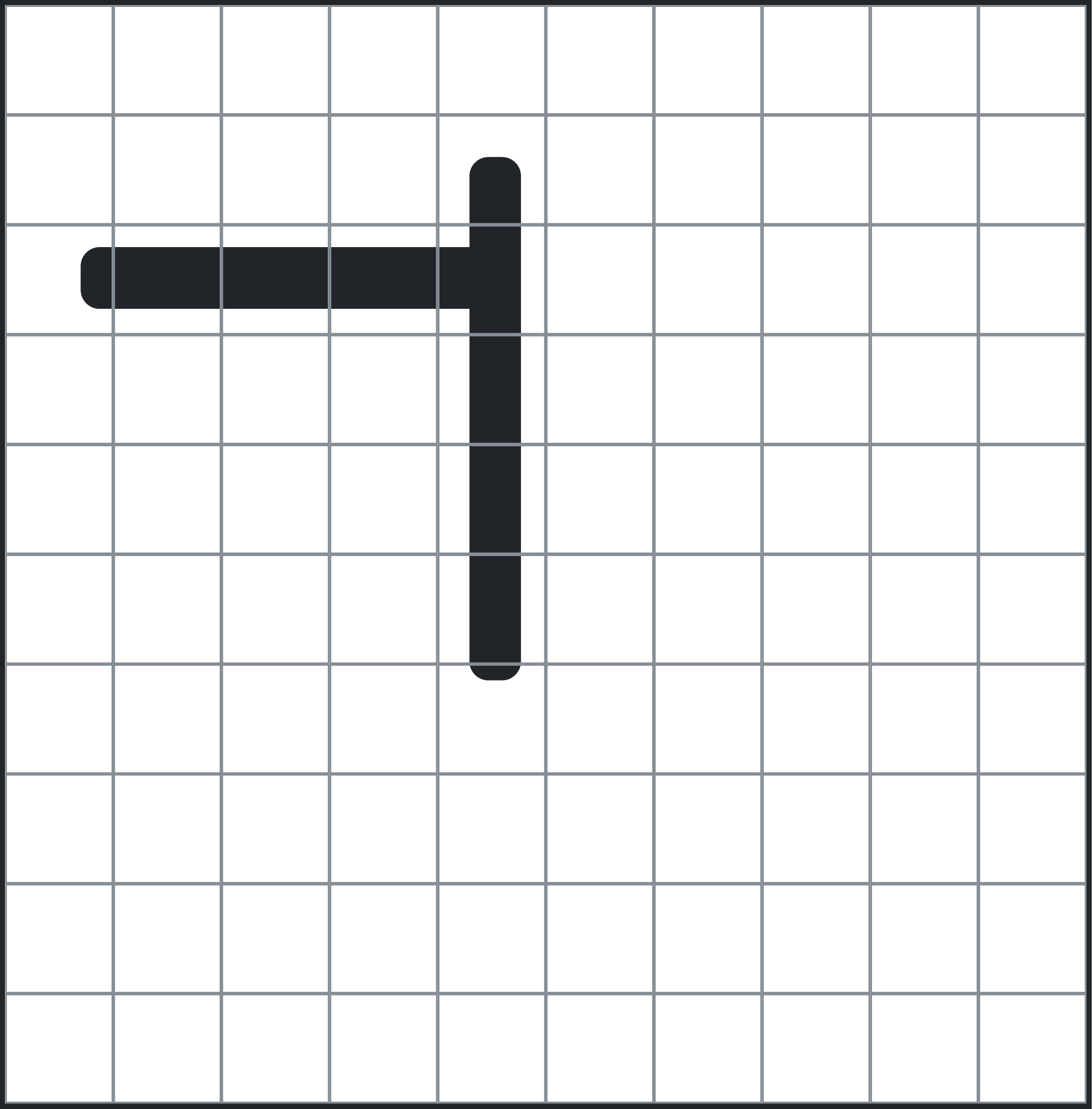}}
    \hfill%
   \raisebox{-0.5\height}{ \includegraphics[width=0.55\textwidth]{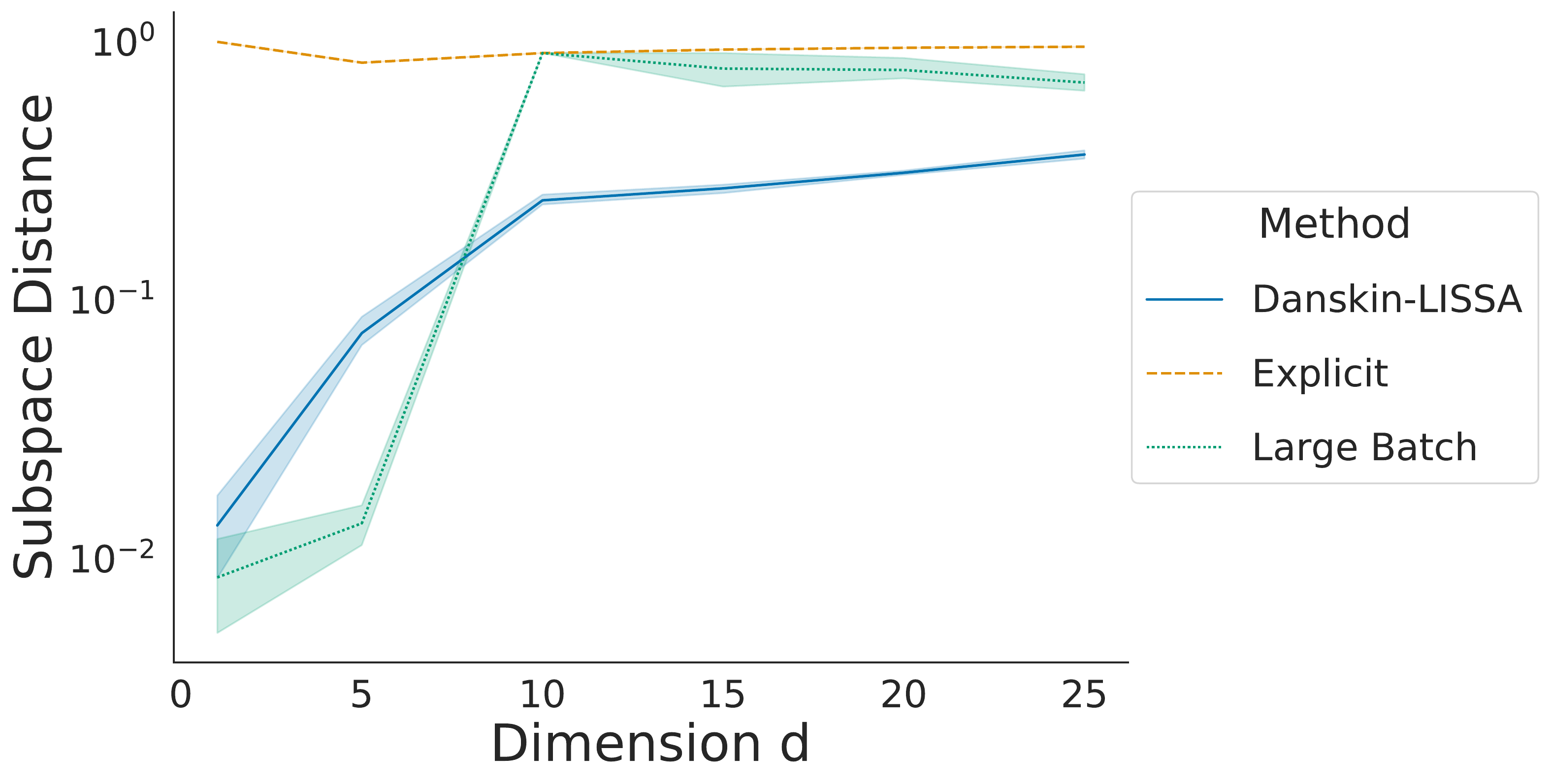}}
    \hfill%
    \hfill%
    \caption{\looseness=-1 (\textbf{Left}) The Puddle World domain \citep{sutton1995generalization}, with the shaded area indicating regions where the agent moves slowly. In our experiments, each grid cell is associated with a column of the implied data matrix. (\textbf{Right}) Subspace distance as a function of the dimension $d$ after $10^8$ gradient steps for three methods: Danskin-LISSA, Explicit, and the Large Batch baseline.}
    \label{fig:pwplot1}
    \vspace{-0.35cm}
\end{figure*}

\subsection{MNIST Dataset}
\label{sec:mnist}
We now consider learning the principal subspace of MNIST images from a training dataset with the Danskin-LISSA algorithm. We represent the data as a matrix $\Psi \in \rR^{784 \times 60 000}$ where each column is a $28 \times 28$ sample image ~(flattened to size $784$) of one of the ten possible digits and from which the mean image has been subtracted.
To accelerate learning speed we use the second-order Adam optimizer \citep{kingma2015adam}.
\cref{fig:mnist} shows that it is possible to effectively learn the principal subspace of this data even while updating as few as 32 pixels (rows) at a time; naturally, using more samples per step results in improved learning speed.
As a point of comparison, we provide the subspace distance obtained by Eigengame \citep{gemp2020eigengame}, a state-of-the-art method that performs PCA by sampling full columns (images) at a time.

To quantify the goodness of the representation learnt on the MNIST training set, we use it to reconstruct MNIST images on the test set.
Denoting $\Psi_{\text{test}} \in \rR^{784 \times 10000}$ the test dataset and $\Phi \in \rR^{784 \times d}$ a representation learnt from the training set, the reconstructed images on the test set are given by $P_\Phi \Psi_{\text{test}}$ where $P_\Phi$ denotes the orthogonal projector onto the column space of $\Phi$.  
\cref{fig:mnist}, right, shows that the MNIST digits reconstructed from the subspace learnt by Danskin-LISSA
qualitatively look similar to the images reconstructed from the true principal components of the training set and achieve a similar reconstruction error.

\subsection{Learning the Successor Measure}
\label{sec:deep-svd}

In reinforcement learning (RL), the successor representation \citep{dayan1993improving} encodes an agent's future trajectories from any given state in terms of the vistation frequency to various states.
Of immediate relevance, it is often used as a building block in representation learning for RL, in particular by directly learning its principal subspace \citep{mahadevan2007proto,behzadian2018low,machado18eigenoption}. \looseness=-1 Its extension to continuous state spaces is called the successor measure \citep{blier2021learning}, and is naturally described by an infinite dimensional matrix. Our last experiment illustrates how the Danskin-LISSA algorithm can be used to approximate the principal subspace of the successor measure of the Puddle World domain \citep{sutton1995generalization}.

In our version of this environment, traversing puddles requires more time, resulting in asymmetric successor measure; details of the environment and the reinforcement learning framework are given in \cref{app:pw}.
Here, $s \in [0, 1]^2$ corresponds to a particular two-dimensional state in the environment.
For a collection of sets $\mathcal{X} = \{ X \subset [0, 1]^2 \}$ to be described below, we define the successor measure as
\begin{equation*}
    \Psi(s, X) = \sum_{t \ge 0} \gamma^t \, \mathbb{P} \left( S_t \in X \, | \, S_0 = s \right), \quad \gamma \in (0, 1)
\end{equation*}
The successor measure describes the expected, discounted number of visits to the set $X$ when the agent begins in state $s$ and moves randomly.
We take $\gamma = 0.99$.

Compared to the experiments of the previous sections, we parametrize the representation by a neural network.
We are interested in understanding the degree to which this neural network can be trained to approximate the $d$-dimensional principal subspace of the successor measure. We take the collection $X$ to be the set of non-overlapping cells of a $100 \times 100$ grid (illustrated by \cref{fig:pwplot1}). For computational reasons, we assign the same value of $\Psi(\cdot, X)$ to all states within a grid cell; this value is computed by 
$1,000$ truncated Monte-Carlo rollouts from a start state sampled uniformly at random within a cell.
This produces a $10,000 \times 10,000$ matrix which we treat as ground truth for measuring the accuracy of our predicted subspace.

To gain an understanding of the effectiveness or our method, we compare it with two other gradient-based methods commonly used in reinforcement learning. As the name indicates, the Explicit method maintains a weight vector $w_i$ for each column and relies on the pair of updates from Equation \ref{eqn:explicit_method}, similar to the method used by \citet{bellemare2019geometric,lyle2021effect}.
Note that we present this method only for completeness, as it is not applicable to an infinite number of columns and may otherwise carry an impractically large memory cost.
The Large Batch method, on the other hand, estimates the weight vector $w_i$ using $\phi$ and $\Psi$ evaluated at center of each of the 10,000 grid cells (close in spirit to the Naive method of Section \ref{sec:synthetic}). %

All three methods use Adam \citep{kingma2015adam} to optimize a two-layer
MLP with $512$ hidden units and ReLU activations. We take $J = M = N = 50$ for Danskin-LISSA and $N = 250$ for the two other methods. The step size $\alpha$ was tuned for each method according to a small hyperparameter sweep and after $10^8$ gradient steps averaged across 5 runs.
Details outlining these sweeps and complete experimental methodology can be found in \cref{app:expdetails}.

\cref{fig:pwplot1}, right compares the final subspace distance of these three algorithms
for various values of $d$. We find that the performance of the Danskin-LISSA algorithm degrades gracefully as $d$ is increased, while the Large Batch method is only practical for small values of $d$.
In part, this is explained by the fact that even with such a large batch, there is a residual bias in the latter method's covariance estimate. 
The poor performance of the Explicit method is explained by the fact that a single column is updated at any given time, resulting in stale weight vectors $w_i$. Although in practice this can be mitigated by updating multiple columns at once, the result illustrates an important pitfall with the use of an explicit weight vector.

\section{DISCUSSION \& CONCLUSION}
\label{sec:conc}
In this paper, we presented an algorithm that learns principal components of very large or infinite dimensional matrices by stochastic gradient descent.
Our experiments on synthetic matrices and MNIST images demonstrate that indeed the method converges towards their top principal subspace. 
Our analysis on the Puddle World domain also demonstrates that our algorithm can learn a low-dimensional, neural-network state representation. In deep reinforcement learning (RL), training a network on supervised auxiliary predictions results in its representation corresponding to the principal components of this set of tasks, assuming the network is other unconstrained \citep{bellemare2019geometric}.  Incorporating the Danskin-LISSA procedure within a deep RL architecture may provide performance improvements by incorporating more knowledge about the world into the network’s representation.

For simplicity, in this paper we assumed that all samples used in computing a given gradient estimate are drawn independently. In practice, samples are naturally expensive and it may appear undesirable to require a total of $N + 2J + 2M$ for a single gradient estimate. However, one can improve on this state of affairs by permuting the order in which samples from the batch are presented, constructing different gradient estimates from these permutations, and noting that the average of multiple unbiased estimates remains unbiased (and generally has lower variance).

\section*{Acknowledgements}
The authors would like to thank Ian Gemp, Matthieu Geist  and the anonymous reviewers for useful discussions and feedback on this paper.

We would also like to thank
the Python community \citep{van1995python,oliphant2007python} for developing
tools that enabled this work, including
{NumPy}~\citep{oliphant2006guide,walt2011numpy, harris2020array},
{SciPy}~\citep{jones2001scipy},
{Matplotlib}~\citep{hunter2007matplotlib} and JAX \citep{bradbury2018jax}.

\bibliographystyle{plainnat}
\bibliography{bib} 
\clearpage
\onecolumn
\begin{appendix}
\section{Proofs for \cref{sec:background}}
\label{app:proofs}

\begin{restatable}[]{lemma}{}
\label{lemma:w_star}
Following the notations from \cref{sec:background}, we have
\begin{equation}
    W^*_\Phi = (\Phi^\top \Xi \Phi)^\dagger \Phi^\top \Xi \Psi
\end{equation}
\end{restatable}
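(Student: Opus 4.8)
The plan is to reduce the weighted matrix least-squares problem to a family of ordinary least-squares problems, one per column, solve each via the normal equations, and then reassemble the solution column by column.

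First I would absorb the row weighting by setting $\tilde\Phi = \Xi^{1/2}\Phi$ and $\tilde\Psi = \Xi^{1/2}\Psi$, so that $\cL(\Phi, W) = \|(\tilde\Phi W - \tilde\Psi)\Lambda^{1/2}\|_F^2 = \sum_{t \in \cT}\lambda(t)\,\|\tilde\Phi W_{\cdot t} - \tilde\Psi_{\cdot t}\|_2^2$, using that $\Lambda$ is diagonal and writing $W_{\cdot t}$, $\tilde\Psi_{\cdot t}$ for $t$-th columns. Since $\lambda(t) > 0$ for every $t$, a matrix $W$ minimises $\cL(\Phi, \cdot)$ if and only if each column $W_{\cdot t}$ minimises $W_{\cdot t} \mapsto \|\tilde\Phi W_{\cdot t} - \tilde\Psi_{\cdot t}\|_2^2$; in particular the column distribution plays no role, consistent with \cref{eqn:optimal_weight_vector}.

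Second I would invoke the standard fact that for any matrix $A$ and vector $b$ the vector $A^\dagger b = (A^\top A)^\dagger A^\top b$ minimises $\|Ax - b\|_2^2$ (it is the least-norm element of the affine solution set of the normal equations $A^\top A x = A^\top b$, and optimality follows from orthogonal projection onto $\mathrm{range}(A)$). Applying this with $A = \tilde\Phi$, $b = \tilde\Psi_{\cdot t}$ gives a minimiser $(\tilde\Phi^\top \tilde\Phi)^\dagger \tilde\Phi^\top \tilde\Psi_{\cdot t}$ of the $t$-th term. Stacking these columns, $W^*_\Phi = (\tilde\Phi^\top \tilde\Phi)^\dagger \tilde\Phi^\top \tilde\Psi$ lies in $\argmin_W \cL(\Phi, W)$, and substituting back $\tilde\Phi^\top\tilde\Phi = \Phi^\top\Xi\Phi$ and $\tilde\Phi^\top\tilde\Psi = \Phi^\top\Xi^{1/2}\Xi^{1/2}\Psi = \Phi^\top\Xi\Psi$ yields the claimed formula.

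The only delicate point — and the main obstacle — is that $\Phi$ (hence $\tilde\Phi$) need not have full column rank, so one cannot write $(\tilde\Phi^\top\tilde\Phi)^{-1}$ and must route the argument through the pseudoinverse identity $A^\dagger = (A^\top A)^\dagger A^\top$ together with the characterisation of least-squares solutions via $\mathrm{range}(A^\top A) = \mathrm{range}(A^\top)$. An equivalent route, which I would mention as an alternative, is to argue directly from convexity of $\cL(\Phi,\cdot)$ and the stationarity condition $\nabla_W \cL(\Phi, W) = 2\Phi^\top\Xi(\Phi W - \Psi)\Lambda = 0$ of \cref{eq:formula_gradient}: since $\Lambda$ is invertible this reduces to $\Phi^\top\Xi\Phi\, W = \Phi^\top\Xi\Psi$, and one checks that $W = (\Phi^\top\Xi\Phi)^\dagger\Phi^\top\Xi\Psi$ satisfies it because $(\Phi^\top\Xi\Phi)(\Phi^\top\Xi\Phi)^\dagger$ is the orthogonal projector onto $\mathrm{range}(\Phi^\top\Xi\Phi) = \mathrm{range}(\Phi^\top\Xi^{1/2}) = \mathrm{range}(\Phi^\top\Xi)$, the last equality using invertibility of $\Xi^{1/2}$. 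Convexity then promotes this stationary point to a global minimiser, giving $W^*_\Phi$ as stated.
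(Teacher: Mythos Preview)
Your proposal is correct. Your \emph{alternative} route---set $\nabla_W \cL(\Phi,W)=0$, cancel $\Lambda$ by positivity, and read off the pseudoinverse solution of the normal equations---is exactly the paper's proof, though the paper is terser and does not spell out why the pseudoinverse expression actually solves the normal equations in the rank-deficient case.

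Your \emph{primary} route is a mild variation: you first split the Frobenius objective into a $\lambda$-weighted sum over columns, solve each column by the standard least-squares identity $A^\dagger b=(A^\top A)^\dagger A^\top b$, and reassemble. This buys a cleaner conceptual story (the problem is just $T$ independent weighted regressions, and $\Lambda$ drops out because each $\lambda(t)>0$), and it makes the rank-deficient case explicit rather than swept under an $\Longleftrightarrow$. The paper's approach is shorter but relies on the reader knowing that $(\Phi^\top\Xi\Phi)^\dagger\Phi^\top\Xi\Psi$ satisfies $\Phi^\top\Xi\Phi\,W=\Phi^\top\Xi\Psi$ even when $\Phi^\top\Xi\Phi$ is singular; you supply that justification via the range identity $\mathrm{range}(\Phi^\top\Xi\Phi)=\mathrm{range}(\Phi^\top\Xi)$.
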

\begin{proof}
For a fixed $\Phi \in \rR^{S \times d}$,
\begin{align*}
    \nabla_{W} \cL(\Phi, W) &= \nabla_{W} \|\Xi^{1/2}(\Phi W -\Psi) \Lambda^{1/2} \|^2_F\\
    &= 2 \Phi^\top \Xi (\Phi W - \Psi) \Lambda
\end{align*}
\begin{align*}
     \nabla_{W} \cL(\Phi, W) =0    &\Longleftrightarrow  2 \Phi^\top \Xi (\Phi W^*_\Phi - \Psi) \Lambda = 0\\
   &\Longleftrightarrow  \Phi^\top \Xi (\Phi W^*_\Phi - \Psi) = 0  \text{ as $\lambda(t)>0$ for all $t \in \cT$}\\
   &\Longleftrightarrow \Phi^\top \Xi \Phi W^*_\Phi = \Phi^\top \Xi \Psi \\
      &\Longleftrightarrow  W^*_\Phi = (\Phi^\top \Xi \Phi)^\dagger \Phi^\top \Xi \Psi \\
\end{align*}
\end{proof}

\begin{restatable}[]{proposition}{svd}
\label{prop:svd}
Let $\textrm{GL}_d(\rR)$ be the set of $d \times d $ invertible matrices. Assume $\Psi$ has strictly decreasing singular values and $\textrm{rank}(\Psi)=r < \infty$ Write $\Psi = F \Sigma B^\T$ for the SVD of $\Psi$ with respect to the inner products $\langle x, y\rangle_{\Xi}$ for all $x, y \in \rR^{S}$ and $\langle x, y \rangle_\Lambda$ for all $x, y \in \rR^T$. For an integer $\ell \in \{1, ..., S\}$, let
$F_\ell \in \rR^{S \times \ell}$ be the matrix
containing the first $\ell$ columns of $F$ (sorted by decreasing singular value).
For a fixed $d \in  \{1, ..., r\}$, 
\begin{align}
\argmin_{\Phi \in \rR^{S \times d}} \min_{W \in \rR^{d \times T}}  \|\Xi^{\frac{1}{2}}(\Phi W - \Psi) \Lambda^{\frac{1}{2}} \|^2_F = \{ \Phi \in \rR^{S \times d} \mid \exists M \in GL_d(\rR), \Phi = F_d M \}  \, .
\label{eq:svd}
\end{align}
\end{restatable}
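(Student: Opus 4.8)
The plan is to reduce the weighted problem to the classical (unweighted) low-rank approximation problem and then invoke the Eckart--Young--Mirsky theorem. Since $\Xi$ and $\Lambda$ are diagonal with strictly positive entries (recall $\xi(s)>0$, $\lambda(t)>0$), the matrices $\Xi^{1/2}$ and $\Lambda^{1/2}$ are invertible, so I would introduce the change of variables $A = \Xi^{1/2}\Phi \in \rR^{S\times d}$, $B = W\Lambda^{1/2} \in \rR^{d\times T}$, and $C = \Xi^{1/2}\Psi\Lambda^{1/2}$. Then $\|\Xi^{1/2}(\Phi W - \Psi)\Lambda^{1/2}\|_F^2 = \|AB - C\|_F^2$, and as $(\Phi,W)$ range over all matrices of the given shapes so do $(A,B)$. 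Hence $\argmin_{\Phi}\min_{W}\cL(\Phi,W)$ is the image under $\Phi = \Xi^{-1/2}A$ of the set of optimal $A$ for $\min_{A,B}\|AB-C\|_F^2$.

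Next I would note that $\{AB : A\in\rR^{S\times d},\, B\in\rR^{d\times T}\}$ is exactly the set of matrices of rank at most $d$, so $\min_{A,B}\|AB - C\|_F^2 = \min_{\rank X\le d}\|X-C\|_F^2$. Writing the ordinary SVD $C = U\Sigma V^\top$ and letting $C_d = U_d\Sigma_d V_d^\top$ be its truncation to the top $d$ singular triples, the Eckart--Young--Mirsky theorem says the minimizer over $\rank X\le d$ is $C_d$; moreover it is the \emph{unique} minimizer because $C$ has the same singular values as $\Psi$, which are strictly decreasing, and $d\le r=\rank\Psi$, so $\sigma_d>\sigma_{d+1}$ (with $\sigma_{d+1}=0$ when $d=r$). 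Thus $(A,B)$ is optimal if and only if $AB = C_d$.

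Then I would parametrize the factorizations of $C_d$. Since $\rank C_d = d$ and $A$ has only $d$ columns, $AB = C_d$ forces $\spn(A)\supseteq\spn(C_d)=\spn(U_d)$, hence by a dimension count $\spn(A)=\spn(U_d)$ and $A$ has full column rank; equivalently $A = U_d M$ for some $M\in\mathrm{GL}_d(\rR)$. Conversely any $A = U_d M$ with $M$ invertible admits the completion $B = M^{-1}\Sigma_d V_d^\top$ (i.e.\ $W = M^{-1}\Sigma_d V_d^\top\Lambda^{-1/2}$), which gives $AB = C_d$. Therefore the optimal $\Phi$ are exactly $\Phi = \Xi^{-1/2}U_d M$ with $M\in\mathrm{GL}_d(\rR)$.

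Finally I would identify $\Xi^{-1/2}U_d$ with $F_d$. Setting $F = \Xi^{-1/2}U$ and $G = \Lambda^{-1/2}V$, one has $\Psi = \Xi^{-1/2}C\Lambda^{-1/2} = F\Sigma G^\top$ with $F^\top\Xi F = U^\top U = I$ and $G^\top\Lambda G = V^\top V = I$, so $F\Sigma G^\top$ is precisely a singular value decomposition of $\Psi$ with respect to the inner products $\langle\cdot,\cdot\rangle_\Xi$ and $\langle\cdot,\cdot\rangle_\Lambda$; since the singular values are distinct this decomposition is unique up to column sign flips, so $\{F_d M : M\in\mathrm{GL}_d(\rR)\} = \{\Xi^{-1/2}U_d M : M\in\mathrm{GL}_d(\rR)\}$, which is the set computed above. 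The only genuinely delicate points are the uniqueness clause of Eckart--Young (where strict decrease of the singular values and $d\le r$ enter) and the column-space bookkeeping that pins down all factorizations $AB = C_d$; the rest is routine linear algebra.
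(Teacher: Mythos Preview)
Your proposal is correct and follows essentially the same route as the paper's proof: both reduce the problem to a rank-$d$ approximation and invoke the Eckart--Young(--Mirsky) theorem to identify the optimal subspace with the span of $F_d$. Your explicit change of variables $A=\Xi^{1/2}\Phi$, $C=\Xi^{1/2}\Psi\Lambda^{1/2}$ to the unweighted setting and your column-space argument characterizing all factorizations $AB=C_d$ are more careful than the paper's brief ``by identification'' step, but the key idea is the same.
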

\begin{proof}
We have $\Psi= F \Sigma B^\T$ where $F \in \rR^{S \times r}$, $\Sigma \in \rR^{r \times r}$ and $B \in \rR^{T \times r}$ satisfy $F^\T \Xi F=I, B^\T \Lambda B = I$. Let $F_d, \Sigma_d$ and $B_d$ the matrices containing the first $d$ columns of $F, \Sigma$ and $B$ respectively.
For a fixed $\Phi \in \rR^{S \times d}$ and if $\Phi$ is full rank, the unique solution of $\min_{W \in \rR^{d \times T}} \||\Xi^{\frac{1}{2}}(\Phi W - \Psi) \Lambda^{\frac{1}{2}} \|^2_F$ is given by $W^*_\Phi =(\Phi^\T \Xi \Phi)^{-1}\Phi^T \Xi \Psi$. When $\Phi$ is orthonormal with respect to the inner product induced by $\Xi$, we have $\Phi^\T \Xi \Phi = I$ and $W^*_\Phi = \Phi^\T \Xi \Psi$.
Moreover, $\rank(\Phi W^*_\Phi) \leq \min(\rank(\Phi), \rank(\Phi^\T \Xi \Psi)) \leq \min(d, \min(d, S, r)) = d$. By the Eckart-Young theorem, given a target matrix $\Psi$, the best approximating matrix of rank at most $d$, with respect to the norm induced by $\Xi$, is $F_d \Sigma_d B_d^\T$ which can be written in terms of an orthogonal projection as follows $F_d F_d^\T \Xi \Psi$. By identification, $\Phi W^*_\Phi = \Phi (\Phi^T \Xi \Psi) = F_d (F_d^\T \Xi \Psi)$ and $\Phi=F_d$ is a solution to \cref{eq:svd}.

As we can turn the basis $\Phi$ for $\spn(F_d)$ into any other basis $\Phi' = \Phi R$ with $R \in \rR^{d \times d}$ an invertible matrix, the set of solutions for $\Phi$ is $\{F_d R : R \in \rR^{d \times d} \text{ invertible} \}$
\end{proof}

\section{Proofs for \cref{sec:algo}}
\label{app:proofsalgo}
Let $\Xi = \E_{s \sim \nu}[e_s e_s^\T]$ and $\Lambda = \E_{t \sim \Lambda}[e_t e_t^\T]$.

\begin{restatable}[]{lemma}{neumanseries}
The $j$-LISSA estimator $\widehat{\Delta}_{j}$ is an unbiased estimator of the partial Neumann series defined in \cref{eqn:neumann_series_for_pseudoinverse}. That is, given $j$ samples $s_{1:j}=\{s_1, s_2, ..., s_j \}$ drawn i.i.d. from $\xi$, we have that
\begin{align*}
\expect_{s_{1:j}\sim \xi}[{\widehat{\Delta}_j}] = \kappa \sum_{i=0}^j (I -  \kappa \Phi^\top\Xi \Phi)^i
\end{align*}
\label{lemma:neumanseries}
\end{restatable}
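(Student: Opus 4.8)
The plan is to establish the identity by induction on $j$, using the recursion of Equation~\eqref{eq:lissa-recursion} together with the i.i.d.\ sampling of $s_{1:j}$ from $\xi$. Before starting the induction I would record two facts the argument leans on. First, the scaling parameter $\kappa$ is to be treated as a deterministic scalar: in Definition~\ref{defn:lissa_estimator} it is $\kappa_0$ times a quantity fixed by the representation $\phi$ (and the support of $\xi$), not by the realised samples, so it commutes freely with expectations. Second, since $\Xi$ is the diagonal matrix carrying the probabilities $\xi(s)$ (equivalently $\Xi = \E_{s\sim\xi}[e_s e_s^\top]$), we have $\E_{s\sim\xi}[\phi(s)\phi(s)^\top] = \sum_{s\in\sspace}\xi(s)\,\phi(s)\phi(s)^\top = \Phi^\top\Xi\Phi$, which is exactly the matrix that will surface after averaging a single rank-one update.

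The base case $j=0$ is immediate: $\widehat{\Delta}_0 = \kappa I$ is deterministic and coincides with the one-term partial sum $\kappa\sum_{i=0}^{0}(I-\kappa\Phi^\top\Xi\Phi)^i$. For the inductive step I would assume $\E_{s_{1:j-1}}[\widehat{\Delta}_{j-1}] = \kappa\sum_{i=0}^{j-1}(I-\kappa\Phi^\top\Xi\Phi)^i$, take expectations in Equation~\eqref{eq:lissa-recursion}, and condition on $s_{1:j-1}$. The key point is that $s_j$ is drawn independently of $s_{1:j-1}$ while $\widehat{\Delta}_{j-1}$ is a function of $s_{1:j-1}$ alone; hence by the tower property the inner conditional expectation over $s_j$ only touches the factor $I - \kappa\phi(s_j)\phi(s_j)^\top$, turning it into $I - \kappa\Phi^\top\Xi\Phi$, and this (constant) matrix then factors out of the remaining expectation over $s_{1:j-1}$. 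Concretely,
\begin{align*}
\E_{s_{1:j}}[\widehat{\Delta}_j]
&= \kappa I + \E_{s_{1:j-1}}\!\Big[\, \E_{s_j}\!\big[I - \kappa\phi(s_j)\phi(s_j)^\top\big]\,\widehat{\Delta}_{j-1}\,\Big]\\
&= \kappa I + (I - \kappa\Phi^\top\Xi\Phi)\,\E_{s_{1:j-1}}[\widehat{\Delta}_{j-1}]\\
&= \kappa I + \kappa\sum_{i=1}^{j}(I-\kappa\Phi^\top\Xi\Phi)^i
= \kappa\sum_{i=0}^{j}(I-\kappa\Phi^\top\Xi\Phi)^i ,
\end{align*}
which closes the induction.

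I do not expect a genuine obstacle here; this is essentially a bookkeeping lemma. The two spots that deserve a careful sentence are (i) confirming that $\kappa$ is non-random with respect to $s_{1:j}$ so that it commutes with the expectations, and (ii) making the independence $s_j \perp \widehat{\Delta}_{j-1}$ explicit to license the tower-rule factorisation in the display above — both are immediate from Definition~\ref{defn:lissa_estimator}. As a consistency check, Lemma~\ref{lemma:bias} should then drop out by letting $j\to\infty$ in this identity, comparing with the Neumann expansion $(\Phi^\top\Xi\Phi)^\dag = \kappa\sum_{i=0}^{\infty}(I-\kappa\Phi^\top\Xi\Phi)^i$ of Equation~\eqref{eqn:neumann_series_for_pseudoinverse}, and summing the geometric tail $\kappa\sum_{i=j+1}^{\infty}(I-\kappa\Phi^\top\Xi\Phi)^i = (\Phi^\top\Xi\Phi)^\dag(I-\kappa\Phi^\top\Xi\Phi)^{j+1}$.
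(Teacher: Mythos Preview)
Your proposal is correct and follows essentially the same route as the paper: induction on $j$ via the recursion of Equation~\eqref{eq:lissa-recursion}, using independence of $s_j$ from $\widehat{\Delta}_{j-1}$ and the identity $\E_{s\sim\xi}[\phi(s)\phi(s)^\top]=\Phi^\top\Xi\Phi$. Your added remarks on why $\kappa$ commutes with the expectation and the explicit tower-rule justification are more careful than the paper's own proof, but the argument is the same.
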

\begin{proof}
By induction.
\begin{align*}
&\E[\widehat{\Delta}_0] = \E[ \kappa I] = \kappa I \text{ and } \kappa \sum_{i=0}^0 (I - \kappa \Phi^\T \Xi \Phi)^i = \kappa I \\
&\E_{s_1 \sim \xi}[\widehat{\Delta}_1] = \E_{s_1 \sim \xi}[ \kappa I + (I - \kappa \phi_{s_1} \phi_{s_1}^\T)\kappa I] = \kappa I + \kappa(I - \kappa \Phi^\T \Xi \Phi) \text{ as } \E[\phi_i \phi_i^\T] = \Phi^\T \Xi \Phi\\
&\text{and }\kappa \sum_{i=0}^1 (I - \kappa \Phi^\T \Xi \Phi)^i = \kappa I + \kappa(I - \kappa \Phi^\T \Xi \Phi) 
\end{align*}
Let's suppose that $\E_{s_{1:j-1} \sim \xi}[{\widehat{\Delta}_{j-1}}] = \kappa \sum_{i=0}^{j-1} (I - \kappa \Phi^\T N \Phi)^i.$ Then,
\begin{align*}
    \E_{s_{1:j} \sim \xi}[\widehat{\Delta}_{j}] &= \E_{s_{1:j}}[\kappa I + (I - \kappa \phi_{s_j} \phi_{s_j}^\T)\widehat{\Delta}_{j-1}]\\
    &= \kappa I + \E_{s_{1:j}}[(I - \kappa \phi_{s_j} \phi_{s_j}^\T)\widehat{\Delta}_{j-1}]\\
    &= \kappa I + \E_{s_j\sim \xi}[I - \kappa \phi_{s_j} \phi_{s_j}^\T] \E_{s_{1:j-1} \sim \nu} [\widehat{\Delta}_{j-1}] \\
    &= \kappa I + (I - \kappa \Phi^\T N \Phi) \kappa \sum_{i=0}^{j-1} (I - \kappa \Phi^\T N \Phi)^i\\
    &= \kappa \sum_{i=0}^j (I - \kappa \Phi^\T N \Phi)^i
\end{align*}
Hence, the conclusion.
\end{proof}

\bias*
\begin{proof}
\begin{align*}
 \text{bias}(\widehat{\Delta}_{j}) &= \E(\widehat{\Delta}_{j}) -  (\Phi^\T \Xi \Phi)^{\dag}\\
&= \kappa \sum_{i=0}^j (I - \kappa \Phi^\T \Xi \Phi)^i - (\Phi^\T \Xi \Phi)^{\dag} \text{ by \cref{lemma:neumanseries}}\\
 &= \kappa (I- (I - \kappa \Phi^\T \Xi \Phi))^{\dag} (I - (I - \kappa \Phi^\T \Xi \Phi)^{j+1}) - (\Phi^\T \Xi \Phi)^{\dag} \text{ using the closed form of a geometric series}\\
  &= -( \Phi^\T \Xi \Phi)^{\dag} (I - \kappa \Phi^\T \Xi \Phi)^{j+1}
\end{align*}
\end{proof}

\unbiasedestimator*
\begin{proof}
By definition,
\begin{align*}
\hat g_{\textsc{dl}}(s) = \hat w_i' \big(\phi(s)^\top \hat w_i - \psi_i(s)\big)
\end{align*}
Plugging in $ \hat w_i = \hat C \phi(s') \psi_i(s')$ and  $\hat w_i' = \hat C' \phi(s'') \psi_i(s'')$, we have
\begin{align*}
   \hat g_{\textsc{dl}}(s)^\top = \big(\phi(s)^\top  \hat C \phi(s') \psi_i(s') - \psi_i(s)\big) (\hat C' \phi(s'') \psi_i(s''))^\top    
\end{align*}
Now taking the expectation,
\begin{align*}
  \E_{s, s', s'', s_{1:n}, s^\prime_{1:n}, i} [e_s \hat g_{\textsc{dl}}(s)^\top] &=   \E_{s, s', s'', s_{1:n}, s^\prime_{1:n}, i} \Big[e_s \big(\phi(s)^\top  \hat C \phi(s') \psi_i(s') - \psi_i(s)\big) (\hat C' \phi(s'') \psi_i(s''))^\top \Big]  \\
  &= \E_{s, i}  \Big[e_s \big(\phi(s)^\top  \E_{s_{1:n}}[\hat C] \E_{s'}[ \phi(s') \psi_i(s')] - \psi_i(s)\big) (\E_{s^\prime_{1:n}}[\hat C'] \E_{s''}[\phi(s'') \psi_i(s'')])^\top \Big]  \\
    &= \E_{s, i}  \Big[e_s \big(e_s^\top \Phi  \E_{s_{1:n}}[\hat C] \E_{s'}[ \Phi^\T e_{s'} e_{s'}^\T \Psi e_i] - e_s^\top \Psi e_i\big) (\E_{s^\prime_{1:n}}[\hat C'] \E_{s''}[\Phi^\T e_{s''} e_{s''}^\T \Psi e_i])^\top \Big]   \\
    &= \E_{s, i}  \Big[e_s e_s^\top \big(\Phi  \E_{s_{1:n}}[\hat C] \E_{s'}[ \Phi^\T e_{s'} e_{s'}^\T \Psi] - \Psi \big) e_i e_i^\T  \E_{s''}[\Psi^\T e_{s''} e_{s''}^\T \Phi] (\E_{s_{1:n}^\prime}[\hat C'])^\top  \Big]  \\
    &= \E_s e_s e_s^\top \big(\Phi  \E_{s_{1:n}}[\hat C] \Phi^\T \E_{s'}[e_{s'} e_{s'}^\T] \Psi - \Psi\big)\E_{i}[ e_i  e_i^\T ] \Psi^\T \E_{s''}[ e_{s''} e_{s''}^\T] \Phi \E_{s_{1:n}^\prime}[\hat C']   \\
&= \Xi \big(\Phi  \E_{s_{1:n}}[\hat C] \Phi^\T \Xi \Psi - \Psi\big)\Lambda (\Psi^\T \Xi\Phi (\E_{s_{1:n}^\prime}[\hat C'])^\top )  \\
\end{align*}
where in the last line, we used the fact that $\Xi = \E_{s \sim \nu}[e_s e_s^\T]$ and $\Lambda = \E_{t \sim \Lambda}[e_t e_t^\T]$.
Now, given two unbiased estimators $\hat{C}$ and $\hat{C'}$, we have
\begin{align*}
    \E_{s_{1:n}}[\hat{C}] = (\Phi \Xi \Phi^\top)^\dagger \text{ and }  \E_{s_{1:n}^\prime}[\hat{C'}] = (\Phi \Xi \Phi^\top)^\dagger 
\end{align*}
It then follows that
\begin{align*}
      \E_{s, s', s'', s_{1:n}, s_{1:n}^\prime, i}[e_s \hat g^\top_{\textsc{dl}}(s)] &= \Xi \big(\Phi  (\Phi \Xi \Phi^\top)^\dagger  \Phi^\T \Xi \Psi - \Psi \big)\Lambda (\Psi^\T \Xi)\Phi (\Phi \Xi \Phi^\top)^\dagger\\
      &= \Xi \big(\Phi  (\Phi \Xi \Phi^\top)^\dagger  \Phi^\T \Xi \Psi - \Psi \big)\Lambda ((\Phi \Xi \Phi^\top)^\dagger \Phi^\T \Xi \Psi)^\T\\
      &=  \Xi (\Phi W^*_\Phi - \Psi) \Lambda (W^*_\Phi)^\top\\
      &\propto  \nabla_{\Phi} \cL(\Phi) 
\end{align*}
\end{proof}

\section{Additional Experimental Results}
\label{app:expdetails}
\subsection{Synthetic matrices}
\label{app:syntheticmatrices}
\begin{figure*}[t]
    \centering
 \includegraphics[width=0.49\textwidth]{images/plot_1a_5.pdf}
  \includegraphics[width=0.49\textwidth]{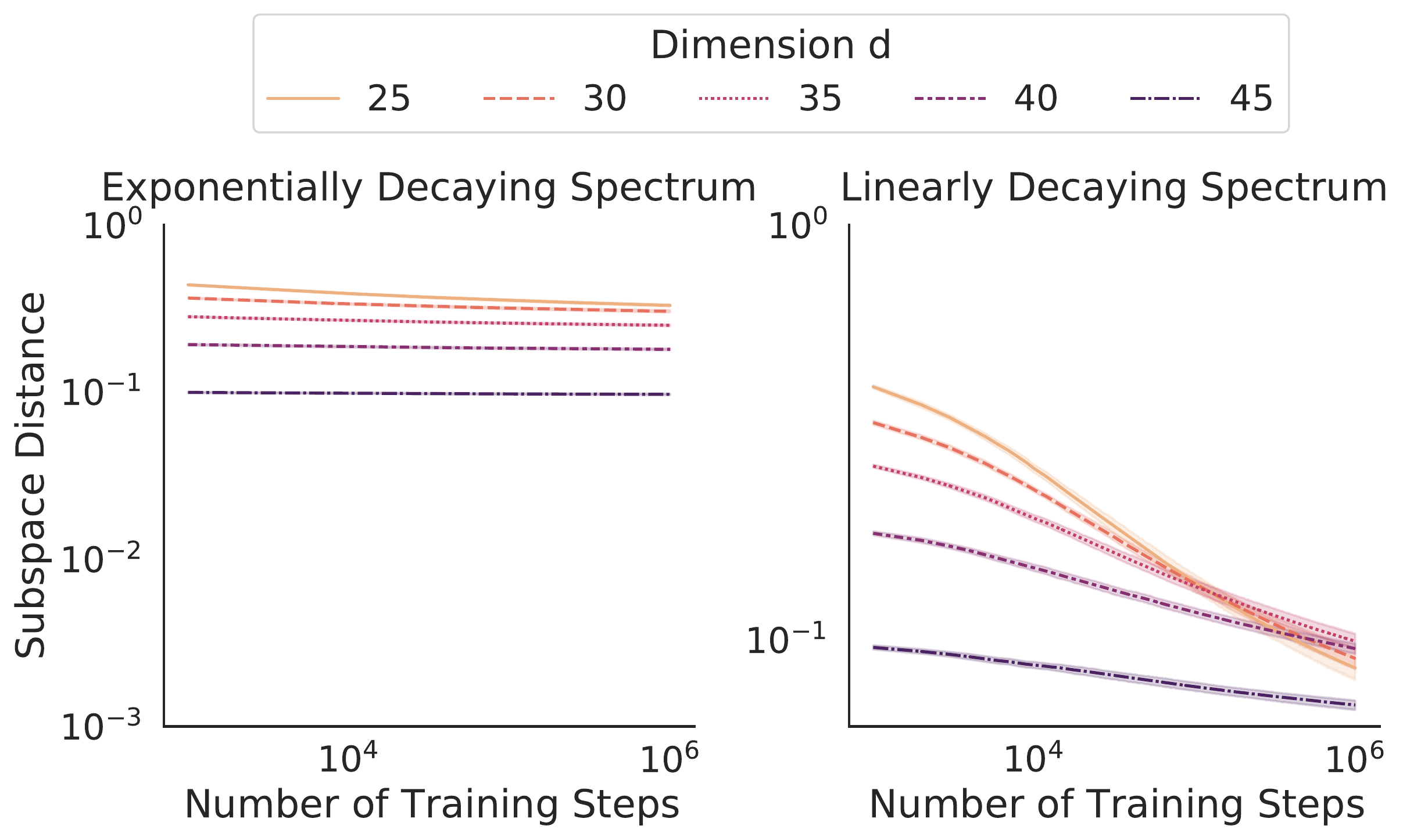}
    \caption{\looseness=-1 Subspace distance over the course of training LISSA for different dimensions on synthetic matrices with a spectrum decaying linearly and exponentially, averaged over 30 seeds. The total number of samples used is 50. Shaded areas represent 95$\%$ confidence intervals.}
    \label{fig:plot1aapp}
\end{figure*}

\begin{figure}[h]
    \centering
    \includegraphics[width=0.8\textwidth]{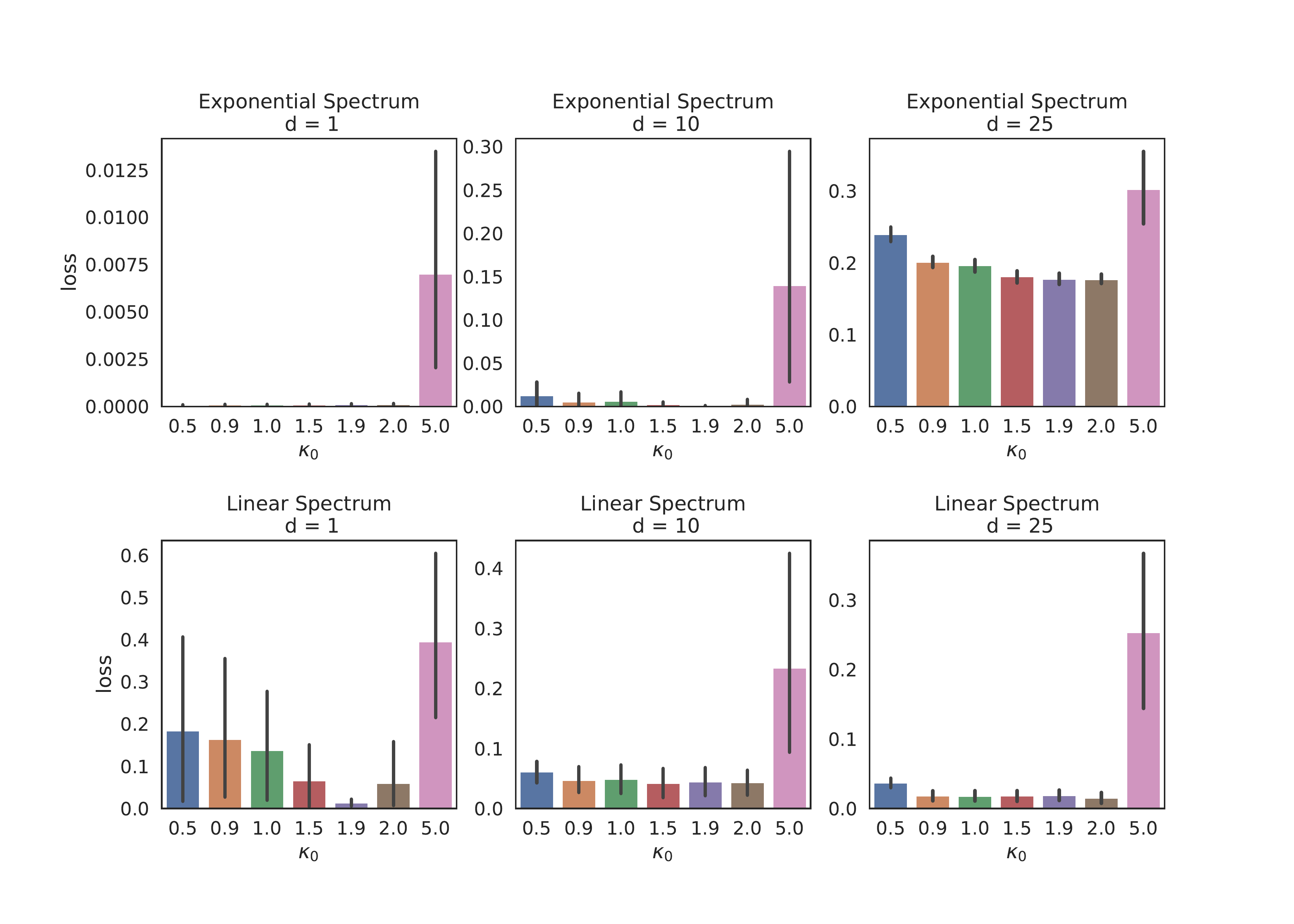}
    \vspace{-0.3cm}
    \caption{\looseness=-1 Subspace distance after $10^6$ training steps of the LISSA algorithm for different $\kappa_0$}
    \label{fig:plotkappa}
\end{figure}
We follow the experimental protocol from \citet{gemp2020eigengame}. We initialize $\Psi \in \rR^{50 \times 50}$ randomly from a normal distribution. We compute its SVD such that $\Psi = F \Sigma B$. Let $\Sigma_{\text{linear}}= \mathrm{diag}(1, ..., 1000)$ and  $\Sigma_{\text{exp}}= \mathrm{diag}(10^0, ..., 10^3)$. We rescale the matrix $\Psi$ such that $\Psi_{\text{linear}}=F\Sigma_{\text{linear}}B$ and $\Psi_{\text{exp}}=F \Sigma_{\text{exp}} B$. The matrix $\Phi \in \rR^{S \times d}$ is also initialized randomly from a standard normal distribution.
We sweeped over the step size $\alpha$ and chose $\alpha=0.001$ which was working well in all the synthetic experiments. We used the SGD otpimizer but found that there was not a big performance difference with the Adam otpimizer \citep{kingma2015adam} in most of these synthetic experiments. In \cref{fig:plotkappa}, we also sweeped over the hyperparameter $\kappa_0$ and found that $\kappa_0=1.9$ was performing well across dimensions and for both linear and exponential spectra. 
We trained the Danskin-LISSA method for $10^6$ time steps.
As a complement to \cref{fig:plot1a}, we show in \cref{fig:plot1aapp} the training curves of the Danskin-LISSA algorithm for a broader range of dimensions $d$. For the exponential spectrum, when $d \geq 25$, larger dimensions are easier to learn. This is the opposite trend to the behavior found when $d \leq 25$ where smaller dimensions are easier to learn. For the linearly decaying spectrum, when $d \geq 25$, larger dimensions are easier to learn which is also the same trend as what we observed for $d \leq 25$.

\subsection{MNIST}
\label{app:mnist}
We found that the Adam optimizer \citep{kingma2015adam} performed best for our MNIST experiments. We performed a sweep over the step-size $\alpha$ and found that $\alpha = 0.005$ worked best for $128$ and $64$ pixels. $\alpha=0.01$ performed best for $32$ pixels. We 
trained the Danskin-LISSA algorithm for $2.5 \times 10^6$ steps. 

\subsection{Puddle World}
\label{app:pw}
\begin{figure}[t]
    \centering
    \includegraphics[width=0.9\textwidth]{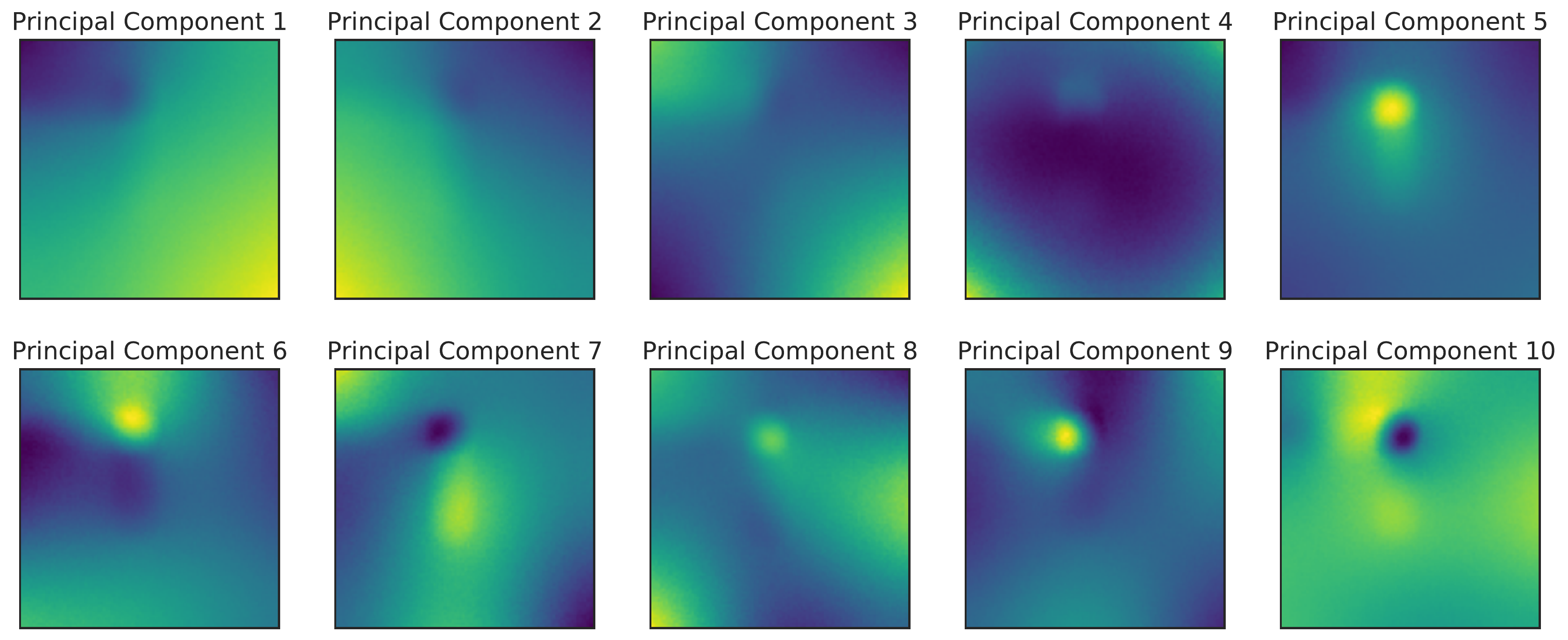}
    \vspace{-0.3cm}
    \caption{\looseness=-1 First 10 principal components of the successor measure of the Puddle World domain.}
    \label{fig:pwpc}
\end{figure}
A Puddle World \citep{sutton1995generalization} is a square arena, with x, y both in [0, 1]. It has a continuous state space and a discrete action space.
There are four actions (up, down, left, right) that move the agent by 0.05 in each of the corresponding directions. A random gaussian noise with standard deviation $0.01$ is also added to transitions in both directions. For our experiments, we used the same puddle configuration found in \citep{sutton1995generalization}. This configuration contains two puddles. The first puddle lies between the points $(0.1, 0.75)$ and $(0.45, 0.75)$ with a radius of 0.1. The second puddle lies between the points $(0.45, 0.4)$ and $(0.45, 0.8)$, also with a radius of $0.1$. While the original Puddle World gives negative rewards for being in a puddle, our puddles instead cause a slowing affect by a factor of $0.5$. That is, when in a puddle, the agent only moves by $0.025$ in each direction. The puddles compound, meaning that in the area where the two puddles overlap the agent will only move a distance of $0.0125$. We chose to use slowing puddles because our task is reward-agnostic, and the successor measure task that we chose would capture the dynamics of the slowing puddles. We visualize in \cref{fig:pwpc} the top-$10$ principal components of the successor measure of Puddle World, demonstrating that they are non-trivial.

The successor measure was computed using $1000$ Monte Carlo rollouts from each starting grid cell, truncated after $700$ steps. We used a discount factor $\gamma = 0.99$.
We subtracted the row sums to center-mean each column of the ground truth matrix $\Psi \in \rR^{10^4 \times 10^4}$. 

For each of the methods, we performed a sweep of learning rates and optimizers (between Adam and SGD) and found that Adam with a learning rate of $10^{-4}$ worked well across the board. We ran each method for 100 million gradient steps. For Danskin-LISSA, we kept $\kappa$ fixed at $1.9$, which we found worked well in our previous experiments. Danskin-LISSA used a batch size of $50$ for each of its $5$ batches, while Large Batch and Explicit used a main batch size of $250$ to ensure that each method saw the same number of samples. To compute $\phi(s)$ we used a two hidden-layer MLP with 512 hidden units per layer.

\end{appendix}

\end{document}